\PassOptionsToPackage{table}{xcolor}
\documentclass{article}

\usepackage{microtype}
\usepackage{graphicx}
\usepackage{subcaption}
\usepackage{booktabs} 
\usepackage{wrapfig} 
\usepackage{multirow}

\usepackage{hyperref}

\usepackage[preprint]{icml2026}

\usepackage{amsmath}
\usepackage{amssymb}
\usepackage{mathtools}
\usepackage{amsthm}

\usepackage{hyperref}
\usepackage{url}
\usepackage{bbm}
\usepackage{wrapfig} 
\usepackage{graphicx}
\usepackage{algorithmic}

\usepackage[capitalize,noabbrev]{cleveref}

\usepackage[breakable]{tcolorbox}
\DeclareRobustCommand{\remarkbox}[2][gray!15]{%
\begin{tcolorbox}[
        breakable,
        left=0pt,
        right=0pt,
        top=0pt,
        bottom=0pt,
        colback=#1,
        colframe=gray!15,
        width=\columnwidth,
        arc=0pt,outer arc=0pt,
        ]
        #2
\end{tcolorbox}
}

\tcbuselibrary{skins} 
\newtcolorbox{chatbox}[1][]{
  enhanced,
  colback=gray!10,
  colframe=gray!100,
  title=#1
}
\usepackage{dashrule}
\newcommand{\dashsep}{\vspace{1pt}\noindent\hdashrule{\linewidth}{0.5pt}{2pt 3pt}\vspace{1pt}}

\theoremstyle{plain}
\newtheorem{theorem}{Theorem}[section]

\newtheorem{lemma}[theorem]{Lemma}
\newtheorem{corollary}[theorem]{Corollary}
\theoremstyle{definition}

\theoremstyle{remark}
\newtheorem{remark}[theorem]{Remark}
\definecolor{f1}{HTML}{BFDDF2}
\usepackage[textsize=tiny]{todonotes}

\newcommand{\grr}{\cellcolor[gray]{.95}}

\usepackage{amsmath,amsfonts,bm}

\def\eqref#1{(\ref{#1})}

\def\1{\bm{1}}

\DeclareMathAlphabet{\mathsfit}{\encodingdefault}{\sfdefault}{m}{sl}
\SetMathAlphabet{\mathsfit}{bold}{\encodingdefault}{\sfdefault}{bx}{n}

\newcommand{\E}{\mathbb{E}}

\newcommand{\KL}{{\mathrm{KL}}}

\newcommand{\w}{\mathbf{w}}

\newcommand{\vecc}{\text{vec}}
\newcommand{\ul}{\mathbf{u}}
\newcommand{\RK}{\mathcal{RK}}

\newcommand{\pr}{\mathbb{P}}

\icmltitlerunning{Margin-Adaptive Confidence Ranking for Reliable LLM Judgement}

\begin{document}

\twocolumn[
  \icmltitle{Margin-Adaptive Confidence Ranking for Reliable LLM Judgement}



  \icmlsetsymbol{equal}{*}

  \begin{icmlauthorlist}
    \icmlauthor{Gaojie Jin}{mac,yyy}
    \icmlauthor{Yong Tao}{yyy}
    \icmlauthor{Lijia Yu}{oxf}
    \icmlauthor{Tianjin Huang}{yyy,tue}
  \end{icmlauthorlist}

  \icmlaffiliation{mac}{Department of Artificial Intelligence, University of Macau}
  \icmlaffiliation{yyy}{Department of Computer Science, University of Exeter}
  \icmlaffiliation{oxf}{Institute of AI for Industries, Chinese Academy of Sciences}
  \icmlaffiliation{tue}{Department of Mathematics and Computer Science, Eindhoven University of Technology}

  \icmlcorrespondingauthor{Tianjin Huang}{t.huang2@exeter.ac.uk}

  \icmlkeywords{Machine Learning, ICML}

  \vskip 0.3in
]



\printAffiliationsAndNotice{}  

\begin{abstract}
\citet{jung2024trust} introduce a hypothesis testing framework for guaranteeing agreement between large language models (LLMs) and human judgments, relying on the assumption that the model’s estimated confidence is monotonic with respect to human-disagreement risk. 
In practice, however, this assumption may be violated, and the generalization behavior of the confidence estimator is not explicitly analyzed. 
We mitigate these issues by learning a dedicated confidence estimator instead of relying on heuristic confidence signals.
Our approach leverages simulated annotator diversity and a margin-based ranking formulation to explicitly model how confidently an LLM distinguishes between human-agreement and human-disagreement cases. 
We further derive generalization guarantees for this estimator, revealing a margin-dependent trade-off that informs the design of an adaptive estimator training procedure.
When integrated into fixed-sequence testing, the learned confidence estimator yields improved ranking accuracy and empirically strengthens the monotonic relationship between confidence and disagreement risk, leading to higher success rates in satisfying target agreement levels across multiple datasets and judge models.
\end{abstract}

\section{Introduction}
Large language models are increasingly used as evaluators to judge output quality and preference alignment~\citep{zheng2023judging,dubois2023alpacafarm,park2025adaptive,chiang2023can}.
While this offers scalable, low-cost alternatives to human annotation, a fundamental challenge remains~\citep{xiong2024can}: how can we make LLM-as-a-judge decisions reliably trustworthy for downstream use, particularly when the judge reports high confidence?



Recent principled methods have been proposed to improve the reliability of LLM-as-a-judge systems.
For instance, \citet{yadkori2024mitigating} introduce conformal abstention to better align LLM judgments with human evaluations, while \citet{mohri2024language} use conformal prediction to provide high-probability correctness guarantees.
Building on these ideas, \citet{jung2024trust} develop an unsupervised confidence estimator and derive an exact upper bound on disagreement risk conditioned on a calibration set.


A key assumption underlying such confidence-thresholding procedures is monotonicity: instances with higher estimated confidence should correspond to lower disagreement risk with respect to human judgments.
However, recent empirical findings indicate that this assumption may be violated in practice, i.e., confidence estimates can be miscalibrated with human subjectivity (as shown in \Cref{fig:2,fig:3}). 
Moreover, while prior work provides guarantees conditioned on the calibration sample, the generalization behavior of the confidence estimator itself is not explicitly analyzed, leaving open the question of whether the induced confidence ordering remains reliable out of sample.


This motivates our perspective: rather than assuming an LLM’s native confidence is reliable, we learn a dedicated confidence estimator designed to induce an ordering that generalizes beyond the calibration set. 
We treat confidence as a ranking function over instances and optimize it via a margin-based ranking loss that penalizes misordered agreement/disagreement pairs. 
To justify this formulation, we derive a PAC-Bayesian generalization bound on the misranking probability, controlled by the empirical margin-based ranking loss and a margin-dependent complexity term.
\remarkbox{
\begin{theorem}[Informal] Given a parameterized confidence estimator and a margin $\gamma$, its expected ranking loss is bounded by the empirical margin-based ranking loss and the margin-dependent complexity term, i.e.,
\begin{equation}\nonumber
\substack{
\text{Expected} \\
\text{Ranking Loss}
}\;\le\; \substack{
\text{Empirical Margin-based} \\
\text{Ranking Loss}
}\;+\; \substack{
\text{Margin-dependent} \\
\text{Complexity Term}
}.
\end{equation}
\end{theorem}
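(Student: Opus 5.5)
The plan is to follow the PAC-Bayesian margin argument of Neyshabur et al. (2018), adapted from classification margins to pairwise ranking margins. Let $f_{\w}$ be the confidence estimator with weights $\w$. Consider pairs $(x^+,x^-)$, where $x^+$ is a human-agreement instance and $x^-$ a human-disagreement instance. Write $s_{\w}(x^+,x^-) = f_{\w}(x^+)-f_{\w}(x^-)$. Define the expected ranking loss
\[
R(\w)=\pr\bigl(s_{\w}(x^+,x^-)\le 0\bigr)
\]
and the empirical margin loss
\[
\hat R_\gamma(\w)=\tfrac{1}{m}\sum_{i} \mathbbm{1}\bigl[s_{\w}(x_i^+,x_i^-)\le\gamma\bigr].
\]
Here the $m$ calibration pairs are assumed i.i.d. If pairs are instead formed from all agreement/disagreement combinations, the statistic becomes a U-statistic. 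In that case I would reduce to the i.i.d. case via Hoeffding's decomposition into averages of independent blocks, which costs only constants or a $\min(n^+,n^-)$ effective sample size.

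\textbf{Step 1 (PAC-Bayes with margin).} Fix a data-independent prior $P$ and a posterior $Q=\w+\ul$ with random perturbation $\ul$. The key lemma is as follows: if $\pr_{\ul}\bigl(\max_{x}|f_{\w+\ul}(x)-f_{\w}(x)|<\gamma/4\bigr)\ge 1/2$, then with probability $1-\delta$,
\[
R(\w)\le \hat R_\gamma(\w)+ O\Bigl(\sqrt{\tfrac{\KL(Q\|P)+\ln(m/\delta)}{m}}\Bigr).
\]
The argument runs through pairs. Each score moves by less than $\gamma/4$, so the difference $s$ moves by less than $\gamma/2$. Consequently, on the good perturbation set, $\mathbbm{1}[s_{\w}\le 0]\le \mathbbm{1}[s_{\w+\ul}\le\gamma/2]$ and $\mathbbm{1}[s_{\w+\ul}\le\gamma/2]\le\mathbbm{1}[s_{\w}\le\gamma]$. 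I then apply the standard PAC-Bayes bound (McAllester) to the $\gamma/2$-margin loss of the perturbed predictor. The factor of $2$ from conditioning on the good set inflates the KL only by a constant.

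\textbf{Step 2 (perturbation bound).} Take $f_{\w}$ to be a depth-$d$ network, for example an MLP head on LLM features with bounded input norm $B$. Let $\ul\sim\mathcal N(0,\sigma^2 I)$. The layerwise perturbation bound gives
\[
|f_{\w+\ul}(x)-f_{\w}(x)|\le e B\Bigl(\prod_j\|W_j\|_2\Bigr)\sum_j \tfrac{\|U_j\|_2}{\|W_j\|_2},
\]
and random-matrix concentration gives $\|U_j\|_2\lesssim\sigma\sqrt{h\ln(dh)}$ with probability at least $1/2$. I choose $\sigma\propto \gamma/\bigl(dB\sqrt{h\ln(dh)}\prod_j\|W_j\|_2^{(d-1)/d}\bigr)$, after normalizing the weights so that all layers have equal spectral norm. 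This choice makes the perturbation less than $\gamma/4$. With the prior $P=\mathcal N(0,\sigma^2 I)$, the KL divergence is $\|\w\|^2/(2\sigma^2)$. This yields the margin-dependent complexity term
\[
O\Bigl(\sqrt{\tfrac{B^2d^2h\ln(dh)\prod_j\|W_j\|_2^2\sum_j\|W_j\|_F^2/\|W_j\|_2^2+\ln(m/\delta)}{\gamma^2 m}}\Bigr).
\]

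\textbf{Main obstacle.} The prior cannot depend on $\w$, yet $\sigma$ does. I would handle this with a union bound over a finite grid of values of $\beta=\prod_j\|W_j\|_2^{1/d}$ in a range relevant to $\gamma$; outside that range the bound holds trivially. This adds only a $\ln$ term. A second subtlety is the pairwise dependence noted above, which I would resolve via the U-statistic decomposition. The final result is the informal statement: expected ranking loss $\le$ empirical margin ranking loss $+$ a complexity term scaling as $1/\gamma$. The trade-off in $\gamma$ is that the empirical term grows with $\gamma$ while the complexity term shrinks, which motivates adaptive margin selection.
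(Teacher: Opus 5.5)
Your proposal is correct and follows essentially the same route as the paper. Both arguments restrict the posterior to the set where the perturbation is below $\gamma/4$ in sup norm, which gives the $\gamma/2$ sandwich on pairwise score differences and a factor-2 KL inflation. Both then apply the McAllester PAC-Bayes bound to i.i.d.\ pairs, followed by the Neyshabur et al.\ steps: spectral normalization, Gaussian perturbation with the layerwise perturbation lemma, the KL computation, and a grid union bound over $\beta$, yielding the $\Phi(C_\theta)/\gamma^2$ complexity term. Your explicit tracking of the input norm $B$ and your remark on the U-statistic dependence of pairs built from a common calibration set go beyond the paper. The paper simply assumes the $m_p$ training pairs are drawn i.i.d.\ from $D_{\mathrm{pair}}$, and its final $\Phi$ omits $B$.
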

}
\begin{figure*}[t!]
\includegraphics[width=0.95
\textwidth]{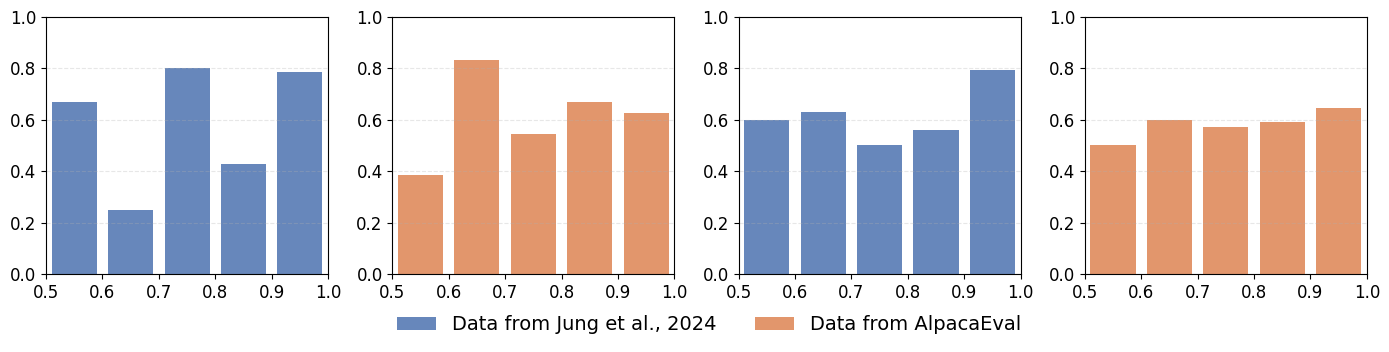}
\centering
\vspace{0mm}
\caption{
Plots of estimated confidence against human–LLM agreement rate using GPT-4 as the judge: (left) predictive probability–based estimator; (right) simulated annotator–based estimator. 
Results are shown on the dataset of \citet{jung2024trust} (light blue) and an additional 500 examples from AlpacaEval \citep{li2023alpacaeval} (orange). 
The horizontal axis denotes the bin of estimated LLM confidence, the vertical axis denotes the human–LLM agreement rate for each bin.
Empirically, the results indicate that human–LLM agreement does not necessarily increase with higher estimated confidence for these two methods.
}
\vspace{-2mm}
\label{fig:2}
\end{figure*}
This analysis reveals a trade-off: larger margins encourage stronger separation but increase the complexity penalty, while smaller margins reduce the penalty but weaken separation. Guided by this bound, we develop a margin-adaptive training procedure that optimizes both the estimator and its effective margin by balancing empirical ranking loss with a differentiable approximation of the complexity term.
\remarkbox{
\textbf{Optimizer 1.2} (Informal). 
\emph{Let $\theta$ be the confidence estimator parameters, $\gamma$ be the margin, and $\beta$ be the hyper-parameter, the estimator is optimized via
\begin{equation}\nonumber
\min_{\theta} \min_{\gamma}\; \substack{
\text{Empirical Margin-based} \\
\text{Ranking Loss}
}\;+\;\beta\cdot \substack{
\text{Margin-dependent} \\
\text{Complexity Term}
}.
\end{equation}
}
}

Empirically, the proposed estimator improves ranking quality  compared with commonly used confidence heuristics. 
Importantly, these improvements translate into empirically stronger monotonic behavior of selective disagreement risk, which in turn yields higher success rates in meeting target agreement levels within fixed-sequence testing pipelines, while maintaining competitive coverage. 
We emphasize that our theoretical results provide guarantees on the generalization of the confidence ranking behavior, and our experiments demonstrate that improved ranking generalization can reduce monotonicity violations in practice.
To summarize, the contributions of this work are as follows:
\begin{itemize}
\item[$\star$] \textbf{Confidence Ranking Framework.} We propose to learn a parameterized confidence estimator for LLM judgment via a margin-based ranking objective, targeting generalizable orderings rather than relying on heuristic confidence scores.
\item[$\star$] \textbf{Theoretical Analyses.} 
We develop PAC-Bayesian generalization bounds for the estimator’s out of sample misranking probability, exposing a margin-dependent loss–complexity trade-off.
\item[$\star$] \textbf{Optimizer and Experiments.}
Guided by our theoretical insights, we introduce a margin-adaptive training procedure and demonstrate improved ranking accuracy and higher success rates in meeting target agreement levels across datasets and judge models in cascaded selective evaluation.
\end{itemize}

\section{Preliminary}
Let $f_{LM}: \mathcal{X} \rightarrow \mathcal{Y}$ denote an LLM judge, where each input $x \in \mathcal{X}$ consists of a query and a pair of candidate responses $(r_1, r_2)$, and the output $y \in \mathcal{Y}$ represents a preference judgment between $r_1$ and $r_2$ (e.g., $r_1 \succ r_2$). 
Let $D$ denote the underlying joint distribution over $\mathcal{X} \times \mathcal{Y}$, which remains fixed but unknown throughout our analysis.

Given a calibration dataset $S_{\mathrm{cal}} = \{(x_i, y_i)\}_{i=1}^{m}$ where each sample $(x_i, y_i)$ represents a query-response pair with corresponding human preference labels, existing calibration methods typically assume that samples are drawn independently from $D$. 

\citet{jung2024trust} introduce simulated annotators, a confidence estimator that approximates diverse simulated human annotation preferences through in-context learning. Concretely, given $K$ preference-labelled examples for each of $N$ human annotators, they simulate annotator behaviour by performing $K$-shot prompting $N$ times and then ensembling the resulting predictions,
\begin{equation}
\begin{aligned}
C_{LM}(x) = \max_i \frac{1}{N} \sum_{j=1}^N &\mathbb{P}_{LM}(r_i|x;(x^{anno}_{1,j},y^{anno}_{1,j}),\\
&\quad\quad\quad...,(x^{anno}_{K,j},y^{anno}_{K,j})),
\end{aligned}
\end{equation}
where $\mathbb{P}_{LM}(r_i|x;...)$ denotes the predictive probability assigned by $f_{LM}$ to the candidate response $r_i$, $(x^{anno}_{i,j},y^{anno}_{i,j})$ represents $i$-th preference-labelled example of $j$-th simulated human annotator. 
Note that $x^{anno}_{i,j}$ is an annotation example used for in-context prompting and is distinct from the test input $x$.

Then, let $S_\lambda:=\{(x,y)\in S_\mathrm{cal}| C_{LM}(x)\geq\lambda \}$ denote the subset of calibration samples with LLM confidence scores above threshold $\lambda$, 
 define the empirical risk over $S_\lambda$ as:
\begin{equation}
\widehat{R}(\lambda):=\frac{1}{|S_\lambda|}\sum_{(x,y)\in S_\lambda}\mathbbm{1}\{f_{LM}(x)\neq y\}.
\end{equation}

The corresponding population risk for samples above confidence threshold $\lambda$ is defined as:
$$R(\lambda):=\mathbb{E}_{(x,y)\sim D}\mathbbm{1}\{f_{LM}(x)\neq y|C_{LM}(x)\ge\lambda\}.$$ 

Since the empirical risk follows a binomial distribution with $|S_\lambda|$ trials, \citet{jung2024trust} compute the exact $(1-\delta)$ upper confidence bound as:
\begin{equation}
\label{eq:r+}
\widehat{R}^+(\lambda):=\sup\left\{R:\mathbb{P}(\mathrm{Bin}(|S_\lambda|,R)\leq\lceil |S_\lambda|\widehat{R}(\lambda)\rceil)\geq\delta\right\}.
\end{equation}

\citet{jung2024trust} assume that the risk function is nearly monotonic in $\lambda$;
specifically, the risk tends to increase as $\lambda$ decreases (\emph{monotonicity assumption}). 
This assumption enables the use of fixed-sequence testing \citep{bauer1991multiple}, i.e., one begins testing at the largest $\lambda$ (e.g., 0.999) and proceeds through a decreasing sequence until the final value at which $\widehat{R}^+(\lambda)$ remains below the target risk level $\alpha$. Formally, the selected threshold is
\begin{equation}
\widehat{\lambda}=\inf\left\{\lambda:\widehat{R}^+(\lambda^{\prime})\leq\alpha\mathrm{~for~}\forall\lambda^{\prime}\geq\lambda\right\}.
\end{equation}
Then, they get the guarantee as follows.

\begin{theorem}[\citet{jung2024trust}]
\label{thm:original}
Consider a threshold $\widehat{\lambda }$ chosen as above,  and a selective evaluator $( f_{LM}, C_{LM})$ operating based on $\widehat{\lambda}$.
Then, with probability at least $1-\delta$,  
\begin{equation}
    \mathbb{P}(f_{LM}(x)=y|C_{LM}(x)\geq\lambda)\geq 1-\alpha.
\end{equation}
\end{theorem}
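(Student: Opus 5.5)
The plan is the standard fixed-sequence-testing argument of \citet{bauer1991multiple}, combined with the exact binomial validity of $\widehat{R}^+$ in \eqref{eq:r+}. The statement should be read with $\lambda=\widehat{\lambda}$, so that the claim is $R(\widehat{\lambda})\le\alpha$ with probability at least $1-\delta$. Let $\Lambda=\{\lambda_1>\lambda_2>\dots\}$ be the predetermined decreasing grid, fixed before seeing $S_{\mathrm{cal}}$, and for each $\lambda\in\Lambda$ consider the null hypothesis $H_\lambda: R(\lambda)>\alpha$. Rejecting $H_\lambda$ means $\widehat{R}^+(\lambda)\le\alpha$. The selection rule gives $\widehat{\lambda}$ as the last grid point reached before the first non-rejection. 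The bad event is therefore exactly that some true null $H_\lambda$ is rejected on the way.

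\textbf{Step 1 (single-test validity).} Fix $\lambda$ and condition on the set of calibration indices with $C_{LM}(x_i)\ge\lambda$, which has size $n=|S_\lambda|$. Since the samples are i.i.d.\ from $D$, the samples in $S_\lambda$ are i.i.d.\ from $D$ conditioned on $\{C_{LM}(x)\ge\lambda\}$. Hence $n\widehat{R}(\lambda)\sim\mathrm{Bin}(n,R(\lambda))$. The Clopper--Pearson form of \eqref{eq:r+} makes $R\mapsto\mathbb{P}(\mathrm{Bin}(n,R)\le k)$ nonincreasing. So if $R(\lambda)>\alpha$, the event $\widehat{R}^+(\lambda)\le\alpha$ forces $\mathbb{P}(\mathrm{Bin}(n,R(\lambda))\le n\widehat{R}(\lambda))<\delta$. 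The standard super-uniformity of the binomial CDF evaluated at its own draw then gives probability at most $\delta$. Averaging over $n$ removes the conditioning. The case $n=0$ is handled by convention: no rejection, or $\widehat R^+=1$.

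\textbf{Step 2 (no multiplicity correction).} Let $\lambda^\ast$ be the first grid point in the fixed order for which $H_{\lambda^\ast}$ is true, i.e.\ $R(\lambda^\ast)>\alpha$. This point is deterministic, because $R$ is a population quantity. If $R(\widehat{\lambda})>\alpha$, then some true null was rejected. Because the procedure stops at the first non-rejection, every grid point preceding that rejected true null was also rejected. In particular $H_{\lambda^\ast}$ itself was rejected, since $\lambda^\ast$ comes no later than it. Therefore
\begin{equation}\nonumber
\mathbb{P}\bigl(R(\widehat{\lambda})>\alpha\bigr)\le\mathbb{P}\bigl(\widehat{R}^+(\lambda^\ast)\le\alpha\bigr)\le\delta
\end{equation}
by Step 1. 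This gives $\mathbb{P}(f_{LM}(x)=y\mid C_{LM}(x)\ge\widehat{\lambda})=1-R(\widehat{\lambda})\ge1-\alpha$ with probability at least $1-\delta$.

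\textbf{Main obstacle.} The delicate part is Step 1. It requires the conditional i.i.d.\ structure given the random subset size, and it requires handling the ceiling in \eqref{eq:r+} so that the tail probability is at most $\delta$ rather than approximately $\delta$. I would address this through the monotonicity of the binomial CDF in its parameter and the inequality $\mathbb{P}(F(X)\le\delta)\le\delta$ for a discrete $X$ with CDF $F$. Note that the monotonicity assumption on $R$ is not needed for validity, only for power. The reduction to the single hypothesis $H_{\lambda^\ast}$ in Step 2 works for any fixed ordering.
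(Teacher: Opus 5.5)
Your proof is correct. The single-threshold part matches the paper's Appendix~C argument. The paper also shows that $R(\lambda)>\widehat{R}^+(\lambda)$ forces $g(\widehat{R}(\lambda);R(\lambda))<\delta$, and then uses super-uniformity of a CDF evaluated at its own draw. It treats $|S_\lambda|$ as fixed; conditioning on the selected index set, as you do, is the cleaner way to justify the $\mathrm{Bin}(|S_\lambda|,R(\lambda))$ law.

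The routes differ in how they pass from a fixed $\lambda$ to $\widehat{\lambda}$. The paper notes that $\widehat{R}^+(\widehat{\lambda})\le\alpha$ by construction and substitutes $\widehat{\lambda}$ into the pointwise statement $\mathbb{P}(R(\lambda)\le\widehat{R}^+(\lambda))\ge 1-\delta$. Because $\widehat{\lambda}$ is chosen using the same calibration data, a pointwise $1-\delta$ guarantee does not transfer to it automatically, so that substitution is not justified as written.

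Your Step~2 supplies the missing argument. It shows that $\{R(\widehat{\lambda})>\alpha\}$ is contained in the event that the deterministic first true null $\lambda^\ast$ is rejected. One application of Step~1 at $\lambda^\ast$ then controls the error, with no union bound over the grid.

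Your route therefore gives a complete proof for any data-independent ordering of thresholds. It also shows that the monotonicity assumption affects only the power (coverage) of the procedure, not the validity of the $1-\delta$ guarantee. The paper instead presents monotonicity as what enables fixed-sequence testing.
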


\section{Theoretical Analyses for Confidence Ranking}
\subsection{Problem Formulation}

\textbf{Problem.} \citet{jung2024trust} assume that the selective human-disagreement risk is approximately monotonic in the confidence threshold $\lambda$, i.e., the risk tends to increase as $\lambda$ decreases. However, this monotonicity condition is primarily empirical and is not supported by a formal analysis of how the confidence estimator generalizes beyond calibration data. 
As a result, treating monotonicity as a given introduces a structural vulnerability in selective evaluation.
\begin{itemize}
\item \textbf{Miscalibration.} If the estimator is miscalibrated, the induced ordering of instances may be unreliable, leading to selective risk curves that violate the expected monotonic behavior. More broadly, the estimator's out-of-sample behavior is not explicitly characterized.
\item \textbf{Cross-task generalization.} In heterogeneous multi-task settings with task-specific calibration sets, a single confidence estimator may not transfer consistently, making it difficult to maintain stable performance across tasks and domains.
\end{itemize}
To mitigate these issues, we introduce a parameterized confidence estimator and analyze it through a PAC-Bayesian ranking framework that bounds its expected misranking error. 
This provides a principled way to reason about the generalization of the confidence-induced ordering and empirically reduces monotonicity violations in practice.


\begin{figure}[t!]
\includegraphics[width=0.4
\textwidth]{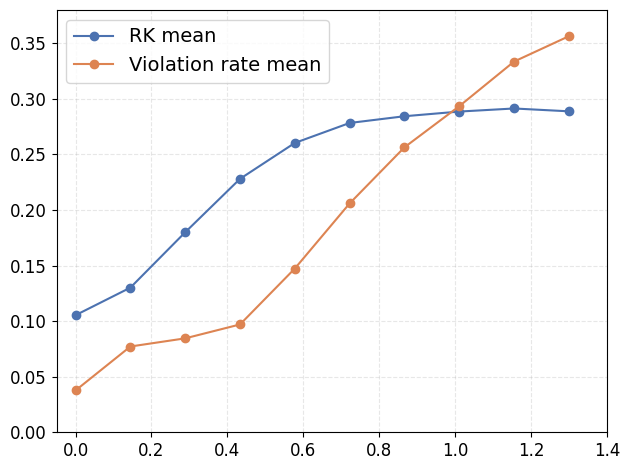}
\centering
\vspace{-2mm}
\caption{
Bernoulli Simulation Study (10,000 trials): Increasing noise (and thus misranking) consistently increases both ranking loss and the monotonicity-violation rate, suggesting that reducing ranking error also improves monotonicity during optimization. Details are given in Appendix~\ref{app:c.1}.
}
\vspace{-2mm}
\label{fig:1}
\end{figure}
 
\textbf{Parameterized estimator setting.} Given an instance–label pair $(x,y)$ and an LLM $f_{LM}$, let
$a(x) \in \{0,1\}$ denote whether the model prediction agrees with the human annotation, i.e., 
$a(x)=1$ if $f_{LM}(x)=y$ and 
$a(x)=0$ otherwise. 

Following the setting of \citet{jung2024trust}, for each instance $(x,y)$ we assume access to $K$ preference-labelled examples for each of $N$ simulated human annotators. 
For simulated annotator $j\in [N]$, denote these examples by $\{( x^{ anno}_{1,j},y^{ anno}_{1,j}),...,(x^{ anno}_{K,j},y^{ anno}_{K,j})\}$.
We define the collection of all possible $k$-shot in-context subsets as
\begin{equation}
\mathcal{T} = \bigcup\limits_{j\in[N]}\{t\subseteq \{(x^{ anno}_{i,j},y^{ anno}_{i,j})\}_{i=1}^K \;|\; 1\le|t|\le K \},
\end{equation}
where each element $t\in\mathcal{T}$ represents a distinct set of demonstrations used to condition the LLM judge.

For any $t\in\mathcal{T}$, let $\mathbb{P}_{LM}(r_1|x;t)$ denote the predictive probability assigned by $f_{LM}$ to the candidate response $r_1$
when prompted with the in-context examples $t$. 
We collect these probabilities into a feature vector
\begin{equation}
\label{eq:generate s}
s = (\mathbb{P}_{LM}(r_1|x;t_1),...,\mathbb{P}_{LM}(r_1|x;t_{|\mathcal{T}|})),
\end{equation}
where $t_i\in\mathcal{T}$. 
Note that at test time, for each instance to be scored, we run the LLM judge with each of the in-context subsets to obtain the feature vector. 
For the given $x$, $f_{LM}$, and the corresponding $s$, we then consider a confidence estimator
$$C_\theta(s): \mathbb{R}^{|\mathcal{T}|} \rightarrow [0,1]$$ parameterized by $\theta$,
typically instantiated as a neural network (e.g., an MLP) that maps the collection of LLM predictive scores to a scalar confidence value.

In the theoretical analyses of this work, the confidence function $C_\theta$ is modeled as an 
$n$-layer feedforward neural network with ReLU activations, where each layer consists of 
$h$ hidden units.
The final layer applies a sigmoid activation, rather than a softmax, to produce a scalar confidence value in $[0,1]$.
We use $\theta$ to denote the collection of weights across all layers, and $W_l$ to denote the weight matrix associated with the $l$-th layer.
The corresponding vectorized parameter representation is denoted by $w_l$.
For brevity, bias terms are incorporated into the weight matrices.
We denote $\|W_l\|_2$ as the spectral norm of $W_l$, represents the largest singular value. 
$\|W_l\|_F$ is the Frobenius norm of the weight matrix and $\|w_l\|_p$ is the $\ell_p$ norm of the weight vector, respectively.

\textbf{Ranking error.} The ideal behavior is that $C_\theta$ preserves the ordering induced by
agreement likelihood, i.e.,
if $a(x_i) > a(x_j)$, then $C_\theta(s_i) > C_\theta(s_j)$ (the monotonicity assumption in \citet{jung2024trust}).

To quantify this behavior, we adopt a pairwise ranking formulation.
We define the distribution $D_{\mathrm{pair}}$ and the set $S_{\mathrm{pair}}$ over ordered pairs
$((x_i,a(x_i)),(x_j,a(x_j)))$ restricted to those satisfying
$a(x_i) > a(x_j)$.

For a given $f_{LM}$ and the ordered pairs
$((x_i,a(x_i), s_i)$, $(x_j,a(x_j), s_j))$, where $s_i$ and $s_j$ are generated through \eqref{eq:generate s},
we consider the following margin-based ranking loss:
\begin{equation}
\label{eq:margin-loss}
\ell_\gamma(\theta; x_i,x_j)
:= \mathbbm{1}(C_\theta(s_i)<C_\theta(s_j)+\gamma),
\end{equation}
where $\gamma\ge 0$ is a margin parameter.
The error is $0$ when the higher-agreement point
is ranked above the lower-agreement one by margin $\gamma$, and otherwise $0$.
Note that $\ell_0$ represents the standard ranking loss with $\gamma=0$.

In our theoretical setting, we consider the expected ranking loss, empirical ranking loss, and its margin-based empirical counterpart as
\begin{equation}
\label{eq:rankloss}
\begin{aligned}
&\RK(\theta)
:= \mathbb{E}_{(x_i,x_j)\sim D_{\mathrm{pair}}}
[\ell_0(\theta;x_i,x_j)], \\
&\widehat \RK(\theta)
:= \frac{1}{|S_{\mathrm{pair}}|}\sum_{(x_i,x_j)\in S_{\mathrm{pair}}} \ell_0(\theta; x_i,x_j), \\
&\widehat \RK_{\gamma}(\theta)
:= \frac{1}{|S_{\mathrm{pair}}|}\sum_{(x_i,x_j)\in S_{\mathrm{pair}}} \ell_\gamma(\theta; x_i,x_j).
\end{aligned}
\end{equation}

\textbf{PAC-Bayes.} 
The PAC-Bayesian framework \citep{mcallester1999pac} provides tight upper bounds on the generalization performance of stochastic classifiers, which is defined with respect to a posterior distribution $Q$ over a hypothesis class. 
The resulting bounds are governed primarily by the Kullback–Leibler (KL) divergence between the posterior distribution $Q$ and a prior distribution $P$ over classifiers. 
Within this setting, we define the stochastic confidence estimators over the posterior $Q$ of the type $C_{\theta+\mathbf{u}}$ \citep{neyshabur2017pac}, where $\mathbf{u}$ is a random variable potentially influenced by the training data and $\theta$ is the deterministic parameters of the confidence estimator.

\subsection{Theoretical Analyses}

In this section, we develop a PAC-Bayesian framework, inspired by \citet{neyshabur2017pac}, for learning a confidence function whose induced ordering over input instances aligns with human–LLM agreement. By formulating the problem in terms of pairwise ranking, we derive generalization bounds on the proportion of misordered pairs. 
These bounds, in turn, provide theoretical control over the near-monotonicity of the selective risk curve that underpins the human-agreement guarantees.

\begin{remark}
The primary differences between our work and that of \citet{neyshabur2017pac} are twofold. First, we derive generalization bounds for confidence estimators that produce continuous outputs in the interval $[0,1]$, whereas they focuse on discrete classification settings. 
Second, we employ ranking error as a principled measure of the monotonicity of the estimated confidence scores and establish explicit generalization guarantees for this ranking-based error, which is not addressed in their framework.
\end{remark}


Let $P$ be any prior distribution over the confidence estimators, chosen
independently of the data, and let $Q$
denote a posterior distribution over estimators of the form $C_{\theta+\ul}$, where $\ul$ is a random variable whose distribution may also depend on
the training data.
The following theorem gives a PAC-Bayesian upper bound on the true ranking risk over posterior distribution.

\begin{theorem}[PAC-Bayesian Bound for Expected Ranking Error]
\label{thm:pacbayes-ranking}
Let $\delta'\in(0,1)$, $S_{\mathrm{pair}}$ be a set of $m_p$ i.i.d.\ training pairs drawn from
$D_{\mathrm{pair}}$.
Then, with probability at least $1-\delta'$ over the draw of these pairs, 
the expected error of $C_{\theta+\ul}$ can be bounded as follows
\begin{small}
\begin{equation}
\label{eq:pacbayes-ranking}
\begin{aligned}
\mathbb{E}_{\ul}
\big[ \RK(\theta+\ul) \big]
&\;\le\;
\mathbb{E}_{\ul}
\big[\widehat \RK(\theta+\ul)\big] \\
&\quad\quad\quad\;\;+
\sqrt{ \frac{\KL(\theta+\ul\| P)
+ \ln \frac{m_p}{\delta'}}
{2(m_p-1)} }.
\end{aligned}
\end{equation}
\end{small}
\end{theorem}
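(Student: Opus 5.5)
The plan is to treat Theorem~\ref{thm:pacbayes-ranking} as a direct instance of McAllester's PAC-Bayesian bound for a bounded $\{0,1\}$-valued loss. The key observation is that each estimator $C_{\theta'}$ induces a loss $\ell_0(\theta';x_i,x_j)\in\{0,1\}$ on an ordered pair $z=(x_i,x_j)$ drawn from $D_{\mathrm{pair}}$. Once we regard $z$ as a single ``example'' and $\theta'$ as a ``hypothesis,'' the ranking problem becomes an ordinary bounded-loss learning problem over $m_p$ i.i.d.\ samples from $D_{\mathrm{pair}}$. The fact that $C_\theta$ outputs values in $[0,1]$ rather than discrete labels plays no role here, because only the indicator loss enters.

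\textbf{Step 1: a uniform exponential moment bound under the prior.} For each fixed $\theta'$, define $\Delta(\theta') := \RK(\theta')-\widehat\RK(\theta')$. This is a mean of $m_p$ i.i.d.\ bounded terms. Hoeffding's lemma gives, for fixed $\theta'$,
\begin{equation*}
\mathbb{E}_{S_{\mathrm{pair}}}\big[e^{2(m_p-1)\Delta(\theta')^2}\big]\le m_p .
\end{equation*}
This is the standard McAllester/Maurer computation. One integrates the tail bound $\Pr(\Delta\ge\epsilon)\le e^{-2m_p\epsilon^2}$ against the exponential, and using the factor $m_p-1$ in place of $m_p$ is what lets the integral be bounded by $m_p$. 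Since $P$ does not depend on the data, Fubini gives
\begin{equation*}
\mathbb{E}_{S}\,\mathbb{E}_{\theta'\sim P}\big[e^{2(m_p-1)\Delta^2}\big]\le m_p .
\end{equation*}
Markov's inequality then shows that, with probability at least $1-\delta'$, we have $\mathbb{E}_{P}[e^{2(m_p-1)\Delta^2}]\le m_p/\delta'$.

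\textbf{Step 2: change of measure.} Apply the Donsker--Varadhan variational inequality $\mathbb{E}_Q[\phi]\le \KL(Q\|P)+\ln\mathbb{E}_P[e^{\phi}]$ with $\phi=2(m_p-1)\Delta^2$ and $Q$ the law of $\theta+\ul$. This step is valid even when $Q$ depends on the data, since the inequality holds simultaneously for all $Q$. Combining it with Step 1 yields
\begin{equation*}
2(m_p-1)\,\mathbb{E}_{\ul}[\Delta^2]\le \KL(\theta+\ul\|P)+\ln\frac{m_p}{\delta'}.
\end{equation*}

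\textbf{Step 3: conclude.} Jensen's inequality gives $(\mathbb{E}_{\ul}\Delta)^2\le\mathbb{E}_{\ul}\Delta^2$. Taking square roots and rearranging produces \eqref{eq:pacbayes-ranking}.

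The main obstacle is Step 1, because it is where the i.i.d.\ structure of $S_{\mathrm{pair}}$ is actually used. The assumption in the statement that the $m_p$ pairs are i.i.d.\ from $D_{\mathrm{pair}}$ sidesteps the U-statistic dependence that would arise if the pairs were formed from a shared pool of instances. I would invoke this assumption explicitly, without trying to handle the dependent case. The remaining delicate point is the constant: one must verify $\mathbb{E}[e^{2(m_p-1)\Delta^2}]\le m_p$ (or cite it from \citet{mcallester1999pac}), and this requires only $\ell_0\in[0,1]$.
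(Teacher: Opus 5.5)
Your proposal takes the same route as the paper: the paper invokes McAllester's bound for bounded losses, using an exponential-moment bound under the data-independent prior, Markov's inequality, and the Donsker--Varadhan change of measure, with each i.i.d.\ pair treated as a single example, and your Steps 1--3 spell out exactly that argument. One small correction is needed in Step 1: integrating the one-sided tail $\Pr(\Delta\ge\epsilon)\le e^{-2m_p\epsilon^2}$ bounds $\mathbb{E}_{S_{\mathrm{pair}}}\big[e^{2(m_p-1)\Delta_+^2}\big]\le m_p$ for $\Delta_+=\max(\Delta,0)$, but not $\mathbb{E}_{S_{\mathrm{pair}}}\big[e^{2(m_p-1)\Delta^2}\big]$, because $\Delta^2$ is also large when $\Delta\ll 0$ (integrating the two-sided Hoeffding tail the same way only gives $2m_p-1$); so run Steps 1--2 with $\Delta_+$ and finish with $\mathbb{E}_{\ul}[\Delta]\le\mathbb{E}_{\ul}[\Delta_+]\le\sqrt{\mathbb{E}_{\ul}[\Delta_+^2]}$, which recovers the stated constant $\ln\frac{m_p}{\delta'}$.
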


\begin{proof}[Sketch of Proof]
The proof follows directly from the classical PAC-Bayesian
inequalities for bounded losses
\citep{mcallester2003simplified}.
Since $\RK(\theta) \in [0,1]$ is bounded and measurable, the moment generating function required by
the PAC-Bayesian bound is finite.
Applying the Donsker-Varadhan variational inequality to the exponential moment
and then bounding the empirical process term via Markov's inequality yields
\eqref{eq:pacbayes-ranking}.
\end{proof}

Building upon the PAC-Bayesian framework established in \Cref{thm:pacbayes-ranking}, which bounds the discrepancy between the expected ranking risk and the empirical ranking risk for stochastic confidence estimators, our next step is to extend the analysis to the deterministic setting. 
To this end, we introduce a sharpness constraint inspired by \citet{neyshabur2017pac} and exploit a chain decomposition of ranking risks. This approach yields an analytically tractable, margin-based generalization bound for deterministic confidence measures, which we present below.

\begin{corollary}
\label{lem:1}
Given \Cref{thm:pacbayes-ranking}, let $C_{\theta}$ be any confidence estimator with parameters $\theta$.
Let $P$ be any prior distribution on the parameters, $Q$ (i.e., $\theta+\mathbf{u}$) be the posterior distribution on learned confidence estimator parameters.
Then, for any $\delta',\gamma>0$, and any (posterior) random perturbation $\mathbf{u}$ $s.t.$ $\mathbb{P}_\mathbf{u}(\max_{s}|C_{\theta+\mathbf{u}}(s)-C_\theta(s)|<\frac{\gamma}{4})\ge \frac{1}{2}$, 
with probability at least $1-\delta'$, we have
\begin{equation}
\RK(\theta) 
\;\le\;
\widehat \RK_{\gamma}(\theta)
+
\sqrt{ \frac{\KL(\theta+\ul\!\parallel\! P)
+ \ln \frac{3\sqrt{m_p}}{\delta'}}
{m_p-1} }.
\end{equation}
\end{corollary}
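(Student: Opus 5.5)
We follow the Neyshabur et al. (2017) derandomization argument, adapted to pairwise ranking losses. The plan has three steps: restrict the posterior to a good set of perturbations, compare deterministic and perturbed losses there, and then apply \Cref{thm:pacbayes-ranking} to the restricted posterior.

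\textbf{Step 1: restrict the posterior to the good set.} Define the good set
\[
\mathcal{U}:=\Big\{\mathbf{u}:\max_s|C_{\theta+\mathbf{u}}(s)-C_\theta(s)|<\tfrac{\gamma}{4}\Big\}.
\]
By hypothesis, $\mathbb{P}_{\mathbf{u}}(\mathcal{U})\ge \tfrac12$. Let $\tilde Q$ be the distribution of $\theta+\tilde{\mathbf{u}}$, where $\tilde{\mathbf{u}}$ is $\mathbf{u}$ conditioned on $\mathcal{U}$. Its density is $q\,\mathbbm{1}_{\mathcal{U}}/\mathbb{P}(\mathcal{U})$. A direct computation then gives
\[
\KL(\tilde Q\|P)\le \frac{1}{\mathbb{P}(\mathcal{U})}\big(\KL(Q\|P)+H\big),
\]
where $H$ is the binary entropy term. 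Since $\mathbb{P}(\mathcal{U})\ge\tfrac12$, this yields the standard estimate
\[
\KL(\tilde Q\|P)\le 2\big(\KL(\theta+\mathbf{u}\|P)+1\big).
\]
Here we use $-Z\ln Z-(1-Z)\ln(1-Z)\le 1$ together with $\mathbb{P}(\mathcal{U})\ge 1/2$.

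\textbf{Step 2: chain decomposition of ranking losses.} For every $\tilde{\mathbf{u}}\in\mathcal{U}$ and every pair $(s_i,s_j)$, each score moves by less than $\gamma/4$. The gap $C(s_i)-C(s_j)$ therefore changes by less than $\gamma/2$. This gives two pointwise comparisons:
\[
\ell_0(\theta;x_i,x_j)\le \ell_{\gamma/2}(\theta+\tilde{\mathbf{u}};x_i,x_j),
\qquad
\ell_{\gamma/2}(\theta+\tilde{\mathbf{u}};x_i,x_j)\le \ell_\gamma(\theta;x_i,x_j).
\]
For the first: if the perturbed gap is at least $\gamma/2$, the deterministic gap is positive. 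For the second: if the deterministic gap is at least $\gamma$, the perturbed gap exceeds $\gamma/2$. Taking expectations gives
\[
\RK(\theta)\le \mathbb{E}_{\tilde{\mathbf{u}}}[\RK_{\gamma/2}(\theta+\tilde{\mathbf{u}})],
\qquad
\mathbb{E}_{\tilde{\mathbf{u}}}[\widehat\RK_{\gamma/2}(\theta+\tilde{\mathbf{u}})]\le \widehat\RK_\gamma(\theta).
\]

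\textbf{Step 3: apply \Cref{thm:pacbayes-ranking} to the bounded margin loss.} The proof of \Cref{thm:pacbayes-ranking} only uses that the loss is bounded and measurable. We therefore invoke it for the $[0,1]$-valued loss $\ell_{\gamma/2}$ with posterior $\tilde Q$. Chaining with Step 2 gives
\[
\RK(\theta)\le \widehat\RK_\gamma(\theta)+\sqrt{\frac{\KL(\tilde Q\|P)+\ln\frac{m_p}{\delta'}}{2(m_p-1)}}.
\]
Substituting the bound from Step 1 yields
\[
\frac{2\KL+2+\ln\frac{m_p}{\delta'}}{2(m_p-1)}\le\frac{\KL+\ln\frac{3\sqrt{m_p}}{\delta'}}{m_p-1},
\]
which is the claimed form. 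The inequality holds because
\[
1+\tfrac12\ln\tfrac{m_p}{\delta'}=\ln\big(e\sqrt{m_p/\delta'}\big)\le \ln\tfrac{3\sqrt{m_p}}{\delta'},
\]
using $e\le 3$ and $\delta'<1$, so that $\sqrt{1/\delta'}\le 1/\delta'$.

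\textbf{Main obstacle.} The step needing the most care is the KL estimate for the conditioned posterior. The conditioning depends on $\theta$, which is learned from data. Step 3 is legitimate only because the PAC-Bayes bound holds uniformly over all posteriors, so a data-dependent $\tilde Q$ is allowed. The constants must also be tracked carefully so that they collapse exactly into $\ln(3\sqrt{m_p}/\delta')$.
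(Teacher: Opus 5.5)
Your proposal is correct and follows the paper's proof essentially step for step: you restrict the posterior to the good set $\{\max_s|C_{\theta+\mathbf{u}}(s)-C_\theta(s)|<\gamma/4\}$, use the two pointwise margin comparisons that give \eqref{eq:b1} and \eqref{eq:b3}, apply \Cref{thm:pacbayes-ranking} to the $\gamma/2$-margin loss under $\tilde Q$, and bound $\KL(\tilde Q\|P)\le 2(\KL(\theta+\mathbf{u}\|P)+1)$ via the entropy decomposition. Your constant bookkeeping is slightly sharper than the paper's: the paper's last step only yields $\ln\frac{3m_p}{\delta'}$, whereas your calculation recovers the $\ln\frac{3\sqrt{m_p}}{\delta'}$ that the statement actually claims.
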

\begin{proof}
See Appendix.
\end{proof}

While the preceding theorem controls the generalization gap between the expected and empirical ranking losses for deterministic confidence estimators, its complexity term is governed by the KL divergence between the posterior and prior. 
Building on the PAC-Bayesian sharpness analysis, we reformulate this term as a margin-based complexity measure that depends explicitly on the parameter norms of the confidence function. 
This transformation yields a more interpretable and practically meaningful generalization bound for confidence measures.

The primary challenge lies in computing the KL divergence within the sharpness limit (or random perturbation limit). 
To tackle this, we employ a two-pronged approach. 
Firstly, we leverage a pred-determined grid method to judiciously select the prior distribution $P$ of confidence estimators. 
Secondly, let $\ul\sim \mathcal{N}(0,\sigma^2I)$ \citep{neyshabur2017pac}, by carefully accounting for both the sharpness limit and the Lipschitz property of the model, we derive an upper bound on the randomness of posterior distribution.
This strategic formulation allows us to effectively bound the KL divergence between $Q$ and $P$, a crucial step in obtaining the following generalization bound.

\begin{corollary}
\label{lem:2}
Given Corollary~\ref{lem:1},
for any $n, h > 0$, let the base estimator $C_{\theta}$ be an $n$-layer MLP with $h$ units each layer and ReLU activation function.
Then, for any $\delta',\gamma>0$, 
with probability at least $1-\delta'$, we have
\begin{equation}
\label{eq:bound}
\RK(\theta)
\;\le\;
\underbrace{\widehat \RK_{\gamma}(\theta)}_{\textbf{Empirical Margin Loss}}
+
\underbrace{\mathcal{O}\Bigg(\sqrt{ \frac{\Phi(C_{\theta})
+ \ln \frac{3m_p}{\delta'}}
{\gamma^2 (m_p-1)} }\Bigg )}_{\textbf{Margin-based Complexity Term}},
\end{equation}
where $\Phi(C_\theta)=n^2h\ln(nh)\prod_{l=1}^n \|W_l\|_2^2 \sum_{l=1}^n \frac{\|W_l\|^2_F}{\|W_l\|^2_2}$.
\end{corollary}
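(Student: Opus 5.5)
Our plan is to follow the spectral-norm PAC-Bayes argument of \citet{neyshabur2017pac}, using Corollary~\ref{lem:1} as the starting point. We take the posterior perturbation $\ul\sim\mathcal{N}(0,\sigma^2 I)$ and a Gaussian prior $P=\mathcal{N}(0,\sigma^2 I)$, so that $\KL(\theta+\ul\|P)=\sum_l \|w_l\|_2^2/(2\sigma^2)=\sum_l\|W_l\|_F^2/(2\sigma^2)$. Two things then remain. First, we choose $\sigma$ as large as possible while still satisfying the sharpness condition $\mathbb{P}_\ul(\max_s|C_{\theta+\ul}(s)-C_\theta(s)|<\gamma/4)\ge 1/2$. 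Second, we remove the dependence of $\sigma$ on $\theta$ by a union bound over a predetermined grid of priors.

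\textbf{Step 1: perturbation bound.} We first normalize the weights. ReLU is positively homogeneous, so rescaling the layers so that every $\|W_l\|_2$ equals $\tilde\beta:=(\prod_l\|W_l\|_2)^{1/n}$ leaves the pre-sigmoid output unchanged. It also leaves $\prod_l\|W_l\|_2$ and $\sum_l\|W_l\|_F^2/\|W_l\|_2^2$ unchanged, so we may assume this normalization throughout. We assume the inputs are bounded, $\|s\|_2\le B$; this holds since $s\in[0,1]^{|\mathcal{T}|}$ gives $B=\sqrt{|\mathcal{T}|}$. The final sigmoid is $1/4$-Lipschitz, so it only helps. Neyshabur et al.'s layerwise perturbation lemma then gives
\[
|C_{\theta+\ul}(s)-C_\theta(s)|\le e B\,\tilde\beta^{\,n-1}\sum_l\|U_l\|_2 ,
\]
valid whenever $\|U_l\|_2\le \tilde\beta/n$. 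Gaussian random-matrix concentration gives $\|U_l\|_2\le \sigma\sqrt{2h\ln(4nh)}$ for all $l$ simultaneously with probability at least $1/2$. Choosing
\[
\sigma\asymp \frac{\gamma}{nB\tilde\beta^{\,n-1}\sqrt{h\ln(nh)}}
\]
then makes the output perturbation smaller than $\gamma/4$ with probability at least $1/2$, and also keeps $\|U_l\|_2\le\tilde\beta/n$.

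\textbf{Step 2: KL computation.} Substituting this $\sigma$ gives
\[
\KL\lesssim \frac{n^2B^2h\ln(nh)\,\tilde\beta^{2n-2}\sum_l\|W_l\|_F^2}{\gamma^2}
=\frac{n^2B^2h\ln(nh)\prod_l\|W_l\|_2^2\sum_l \|W_l\|_F^2/\|W_l\|_2^2}{\gamma^2},
\]
which is $\mathcal{O}(\Phi(C_\theta)/\gamma^2)$. Inserting this into Corollary~\ref{lem:1} yields the form of \eqref{eq:bound}.

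\textbf{Step 3 (main obstacle): the prior cannot depend on $\tilde\beta$.} The prior $P$ must be fixed before seeing the data, yet $\sigma$ depends on the learned $\tilde\beta$. Following Neyshabur et al., we only need to treat $\tilde\beta$ in the range
\[
\left(\frac{\gamma}{2B}\right)^{1/n}\le \tilde\beta\le \left(\frac{\gamma\sqrt{m_p}}{2B}\right)^{1/n}.
\]
Below this range the margin loss is trivially $1$, and above it the complexity term exceeds $1$, so the bound holds vacuously outside it. We cover the range with a grid of $\mathcal{O}(n\sqrt{m_p})$ candidate values $\hat\beta$ with $|\tilde\beta-\hat\beta|\le\tilde\beta/n$, so that $\hat\beta^{\,n-1}$ is within a factor $e$ of $\tilde\beta^{\,n-1}$. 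We then apply Corollary~\ref{lem:1} to each grid prior with confidence $\delta'/|\text{grid}|$. This changes the logarithmic term to $\ln(3m_p/\delta')$ up to constants and a $\ln n$ factor absorbed into $\Phi$.

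The delicate part is checking that the constants survive the grid approximation, in particular the requirement $\|U_l\|_2\le\tilde\beta/n$ needed by the perturbation lemma. If $B$ is not absorbed into the $\mathcal{O}$, we would state it explicitly as a constant depending on $|\mathcal{T}|$.
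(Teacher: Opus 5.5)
Your proposal is correct and follows essentially the same route as the paper's proof. Both arguments use spectral normalization via ReLU homogeneity, a shared Gaussian prior and posterior $\mathcal{N}(0,\sigma^2 I)$, and the random-matrix bound $\|\mathbf{U}_l\|_2\le\sigma\sqrt{2h\ln(4nh)}$ fed into Neyshabur et al.'s perturbation lemma to fix $\sigma$. They then carry out the same KL computation and take a union bound over a grid of $\beta$-values covering $[(\gamma/2B)^{1/n},(\gamma\sqrt{m_p}/2B)^{1/n}]$. Your explicit tracking of the input bound $B$ and the $1/4$-Lipschitz sigmoid, and your check that the lemma's precondition $\|\mathbf{U}_l\|_2\le\|W_l\|_2/n$ holds in the relevant range, fill in details the paper leaves implicit without changing the argument.
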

\begin{proof}
See Appendix.
\end{proof}

\remarkbox{
\begin{remark}[Interpretation]
\label{remark:bound}
Corollary~\ref{lem:2} shows that the expected misranking probability is controlled by a margin-based empirical ranking loss and a margin-dependent complexity term. 
A larger margin $\gamma$ reduces the complexity penalty, but simultaneously amplifies the empirical loss and makes it harder to optimize, while a smaller $\gamma$ eases optimization at the cost of weaker guarantees.
Thus, the bound formalizes a margin-based trade-off between ranking separability and model complexity, motivating the adaptive optimizer in the following.
\end{remark}
}

\section{The Adaptive Optimizer}

The main result of the above PAC–Bayesian analyses is a margin-based ranking generalization bound in \eqref{eq:bound}.
For a given confidence estimator $C_\theta$, the bound states that the true expected misranking error is bounded by the empirical margin-based ranking loss and the margin-dependent complexity term.
This decomposition reveals a fundamental trade–off controlled by the margin parameter $\gamma$, as discussed in Remark~\ref{remark:bound}.
Thus, achieving the best generalization performance requires balancing these opposing forces.

Remark~\ref{remark:bound} implies that no fixed margin $\gamma$ is uniformly optimal across datasets, annotation behaviours, or LLM judges.
When the data is clean and the LLM judge is relatively reliable, a larger margin induces strong ranking separation and produces a smaller complexity term.
When model–human agreement is noisy or heterogeneous, enforcing a large margin may be unrealistic, causing the empirical margin loss to dominate.
This motivates an adaptive margin strategy that dynamically calibrates $\gamma$ based on empirical behaviour and theoretical insights.

We propose a training objective that selects the margin $\gamma$ automatically by minimizing a bound-guided surrogate objective, i.e.,
\begin{equation}
\label{eq:opt1}
\min_{\theta,\gamma} \widehat{\RK}_{\gamma}(\theta)+\beta\mathcal{C}_\gamma(\theta),
\end{equation}
where $\beta>0$ controls the trade-off between empirical fit and model complexity, and $\mathcal{C}_\gamma(\theta)$ denotes the complexity term induced by the generalization bound.

Recalling the complexity term in \eqref{eq:bound},  $\mathcal{C}_\gamma(\theta)$ depends primarily on the margin $\gamma$ as well as the spectral and Frobenius norms of the model weights ($m_p,n,h,\delta'$ are not optimization objects during training). 
In practice, spectral norms are upper-bounded and well-controlled by Frobenius norms, i.e., $\|W_l\|_2\le \|W_l\|_F$. 
Replacing spectral norms with Frobenius norms yields a computationally efficient, differentiable surrogate that preserves the qualitative dependence of the bound on model capacity and Lipschitzness, while enabling stable end-to-end optimization with standard optimizers,
thus we let $\mathcal{C}_\gamma(\theta)=\frac{\sqrt{\sum_l\|W_l\|^2_F}}{\gamma}$.

However, jointly optimizing $(\theta,\gamma)$ is challenging and can lead to unstable training,
and the $\gamma$-based empirical margin loss is non-smooth. 
We therefore adopt a decoupled, alternating update scheme: $\theta$ is updated using a smooth surrogate ranking loss with a fixed $\gamma$, while $\gamma$ is updated for each iteration with the minimal objective function, i.e.,
\begin{equation}
\label{eq:realopt}
\min_{\theta} \min_{\gamma} \widehat{\RK^s_{\gamma}}(\theta)+\beta\mathcal{C}_\gamma(\theta).
\end{equation}
Since $\widehat{\RK}_{\gamma}(\theta)$ corresponds to a non-differentiable 0–1 ranking error, we replace it with a differentiable surrogate $\widehat{\RK^s_{\gamma}}(\theta)$ in \eqref{eq:realopt} to enable gradient-based optimization.
Specifically, we adopt a softmax-based ranking loss by replacing $\ell_\gamma$ with
$\log(1+e^{-\frac{C_\theta(s_i)-C_\theta(s_j)-\gamma}{0.1}})$. 
This type of surrogate is standard in machine learning; for example, in classical classification tasks, the cross-entropy is routinely used as a differentiable replacement for the 
0-1 classification error during training.

\remarkbox{
\begin{remark}
The margin-based bound exposes a fundamental trade-off between empirical ranking performance and model complexity. 
By designing training procedures that adapt the margin during training, we can construct confidence estimators whose induced rankings generalize more reliably. 
This encourages confidence estimators to produce smooth and well-ordered confidence scores, thereby supporting the near-monotonic selective risk curves required for human-agreement guarantees. 
Moreover, since ranking structure lies at the core of selective evaluation, adaptive margin training directly enhances the robustness and reliability of downstream human-agreement guarantees.
\end{remark}
}

\begin{table*}[t!]
\centering
\caption{Performance of confidence estimators across judge models and datasets.}
\label{tab:1}
\renewcommand\arraystretch{1.2}
\scalebox{0.8}{
\begin{tabular}{clcccccccccccc}
\specialrule{.1em}{.075em}{.075em} 
&\multicolumn{1}{c}{Dataset} && \multicolumn{2}{c}{AlpacaEval} && \multicolumn{2}{c}{HH-RLHF} && \multicolumn{2}{c}{Chatbot Arena} && \multicolumn{2}{c}{TL;DR}\\ 
\cline{4-5} \cline{7-8} \cline{10-11} \cline{13-14}
&\multicolumn{1}{c}{Method} 
&& $\mathcal{RK}$ $\downarrow$  & AUROC $\uparrow$ 
&& $\mathcal{RK}$ $\downarrow$ & AUROC $\uparrow$ 
&& $\mathcal{RK}$ $\downarrow$ & AUROC $\uparrow$ 
&& $\mathcal{RK}$ $\downarrow$ & AUROC $\uparrow$ \\
\cline{1-2} \cline{4-14}

\multirow{6}{*}{\rotatebox{90}{Mistral-7B}}& Predictive Probability 
&& 0.4214 & 0.5795 
&& 0.4763 & 0.5259 
&& 0.3323 & 0.6650 
&& 0.4191 & 0.5806
\\
&Verbalized Confidence 
&& 0.4367 & 0.5625 
&& 0.4738 & 0.5230 
&& 0.3418 & 0.6503 
&& 0.4269 & 0.5798 
\\
&Random Annotator 
&& 0.4321 & 0.5671 
&& 0.3803 & 0.6202 
&& 0.3355 & 0.6628 
&& 0.4085 & 0.6010
\\
&Simulated Annotators 
&& 0.4177 & 0.5816 
&& 0.3986 & 0.5999 
&& 0.3230 & 0.6767 
&& 0.4029 & 0.5987
\\
&Learning Confidence (Vanilla)
&& 0.3865 & 0.6179 
&& 0.3841 & 0.6154
&& 0.2817 & 0.7032
&& 0.3970 & 0.6081
\\
&\grr Learning Confidence (Ours) 
&\grr&\grr \bf 0.3393 &\grr \bf 0.6672 
&\grr&\grr \bf 0.3286 &\grr \bf 0.6805 
&\grr&\grr \bf 0.2743 &\grr \bf 0.7127 
&\grr&\grr \bf 0.3572 &\grr \bf 0.6409
\\
\cline{1-2} \cline{4-14}

\multirow{5}{*}{\rotatebox{90}{Llama3-70B}}& Predictive Probability 
&& 0.4022 & 0.5987 
&& 0.4478 & 0.5510 
&& 0.2552 & 0.7456 
&& 0.4159 & 0.5852
\\
&Random Annotator 
&& 0.3867 & 0.6144 
&& 0.3751 & 0.6374 
&& 0.2629 & 0.7341 
&& 0.3895 & 0.6123
\\
&Simulated Annotators 
&& 0.3839 & 0.6162 
&& 0.3571 & 0.6537 
&& 0.2646 & 0.7354 
&& 0.3919 & 0.6052
\\
&Learning Confidence (Vanilla)
&& 0.3236 & 0.6730
&& 0.3597 & 0.6489
&& 0.2486 & 0.7520
&& 0.3584 & 0.6401
\\
&\grr Learning Confidence (Ours) 
&\grr&\grr \bf 0.2776 &\grr \bf 0.7048 
&\grr&\grr \bf 0.3094 &\grr \bf 0.6945 
&\grr&\grr \bf 0.2165 &\grr \bf 0.7872 
&\grr&\grr \bf 0.3137 &\grr \bf 0.6813
\\
\cline{1-2} \cline{4-14}

\multirow{5}{*}{\rotatebox{90}{Qwen2.5-72B}}& Predictive Probability 
&& 0.4025 & 0.5985 
&&  0.4409 & 0.5540   
&& 0.2457 & 0.7536 
&& 0.3718 & 0.6271
\\
&Random Annotator 
&& 0.3920 & 0.6084 
&&  0.3760 & 0.6369  
&& 0.2480 & 0.7519 
&& 0.3646 & 0.6364
\\
&Simulated Annotators 
&& 0.3922 & 0.6096 
&&  0.3678 & 0.6473 
&& 0.2469 & 0.7512 
&& 0.3613 & 0.6310
\\
&Learning Confidence (Vanilla)
&& 0.3370 & 0.6589
&& 0.3482 & 0.6578
&& 0.2435 & 0.7610
&& 0.3382 & 0.6577
\\
&\grr Learning Confidence (Ours)
&\grr&\grr \bf 0.2707 &\grr \bf 0.7064 
&\grr&\grr \bf 0.2813 &\grr \bf 0.7126 
&\grr&\grr \bf 0.2077 &\grr \bf 0.7840 
&\grr&\grr \bf 0.2790 &\grr \bf 0.7021
\\

\specialrule{.1em}{.075em}{.075em} 
\end{tabular}
}
\vspace{-2mm}
\end{table*}

\section{Empirical Results}

In the following experiments, we first show that our method achieves lower ranking loss and higher AUROC than existing baselines, leading to stronger monotonicity with human-agreement (or disagreement) risk.
We then demonstrate that, when integrated into the hypothesis-testing framework, our method attains a higher guarantee success rate.

\textbf{Datasets.}
We test our approach across four widely used preference and instruction-following datasets: AlpacaEval \citep{dubois2023alpacafarm}, Chatbot Arena \citep{zheng2023judging}, HH-RLHF \citep{bai2022training}, and the TL;DR validation set \citep{stiennon2020learning}. 
AlpacaEval consists of instruction–response pairs designed to benchmark instruction-following performance. 
Chatbot Arena collects human preference judgments from head-to-head model comparisons in open-ended conversational settings. 
HH-RLHF emphasizes safety-aligned behavior, providing paired responses annotated with human preferences along the axes of helpfulness and harmlessness. 
TL;DR focuses on abstractive summarization, where candidate summaries are evaluated based on their alignment with human-written references. 

Following \citet{jung2024trust}, to enable a unified evaluation across these heterogeneous sources, we reorganize all datasets into a consistent format consisting of an instruction and two candidate outputs. 
This formulation allows a judge model to assess the relative quality of the two outputs conditioned on the instruction, producing a confidence score that represents the likelihood of preference.

\textbf{Judge models.}
We employ six judge models spanning a range of parameter scales: Mistral-7B-instruct, Llama3-8B, Llama3-70B, Qwen2.5-32B, Qwen2.5-72B, and GPT-OSS-120B. 
Each model independently evaluates the four datasets introduced above, assigning preference scores for each pair of candidate outputs. 

\begin{figure}[t!]
\includegraphics[width=0.48
\textwidth]{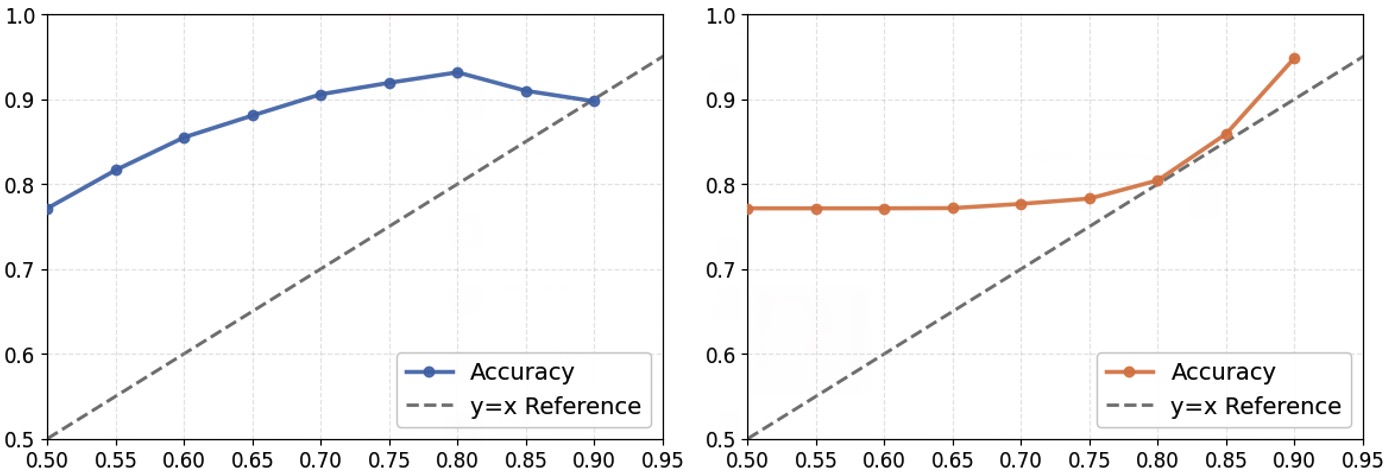}
\centering
\vspace{-5mm}
\caption{
Estimated confidence on Chatbot Arena with Llama3-8B: Simulated Annotators (left) vs.\ our method (right).
The horizontal axis denotes the confidence threshold and the vertical axis reports the human-agreement rate among samples exceeding the threshold.
Our confidence estimates remain monotone with respect to human agreement, whereas Simulated Annotators fails in this setting.
}
\vspace{-5mm}
\label{fig:3}
\end{figure}

\subsection{Evaluating Ranking Loss and Monotonicity}

\textbf{Baselines.} 
We consider five existing confidence measures as baselines: Predictive Probability, Verbalized Confidence, Random Annotator, Simulated Annotators, and Learning Confidence (Vanilla).
Predictive Probability directly uses the likelihood of the predicted preference label,
while Verbalized Confidence
~\citep{bai2022training} prompts the LLM judge to explicitly state its confidence as a scalar. Random Annotator and Simulated Annotators~\citep{jung2024trust} leverage auxiliary human annotations: the former randomly samples one annotator's preference into the prompt and uses the predicted label's likelihood as confidence; the latter constructs 5 simulated annotators with the 5 annotation examples, then uses the largest average likelihood as confidence. 
Learning Confidence (Vanilla) trains the estimator using the standard ranking loss only, while keeping all other settings identical to our method.

\textbf{Parameterized confidence estimator.}
In all experiments, we use a three-layer MLP with hidden widths $64$--$32$--$16$ and ReLU activations.
The output layer applies a sigmoid (rather than a softmax) to produce a scalar confidence score in $[0,1]$.
For each dataset, we additionally generate $\sim$3000 training examples, randomly construct 5000 training pairs, and train the estimator for 30 epochs with learning rate $10^{-3}$, weight decay $10^{-4}$, and $\beta=10^{-4}$.
Following \citet{jung2024trust}, we set $K=5$ and $N=5$ for both Simulated Annotators and our method across all experiments.

\Cref{tab:1} reports ranking loss \eqref{eq:rankloss} and AUROC on test sets, where AUROC measures the discriminative power of the estimated confidence with respect to human-agreement rate above the confidence threshold, evaluated on the test set across four datasets and multiple judge models.
Overall, the results indicate that static, non-parameterized confidence measures do not generalize reliably across datasets and judge models.
In contrast, our margin-adaptive training approach learns a dataset- and judge-specific confidence estimator from a small training set and achieves consistently strong generalization.
Across all settings, our method attains the lowest ranking loss and improved AUROC.
Moreover, \Cref{fig:3} shows when Simulated Annotators yields non-monotone confidence estimates with respect to human agreement, our method restores monotonicity.

\begin{table*}[t!]
\centering
\caption{Comparison against baselines on four datasets at target agreement level $1-\alpha=0.85$.
Results are averaged over 1000 runs with random data splits.
Guarantee success rate is the fraction of runs whose empirical human agreement is at least $1-\alpha$.
CSE denotes the Cascaded Selective Evaluation framework.
L $\to$ Q $\to$ O corresponds to the cascade \{Llama3-8B, Qwen2.5-72B, GPT-OSS-120B\}, and M $\to$ L $\to$ O corresponds to \{Mistral-7B-Instruct, Llama3-70B, GPT-OSS-120B\}.
}
\label{tab:2}
\renewcommand\arraystretch{1.1}
\scalebox{0.72}{
\begin{tabular}{clcccccccccccc}
\specialrule{.1em}{.075em}{.075em} 
&\multicolumn{1}{c}{Dataset} 
&& \multicolumn{2}{c}{AlpacaEval} 
&& \multicolumn{2}{c}{HH-RLHF}
&& \multicolumn{2}{c}{Chatbot Arena}
&& \multicolumn{2}{c}{TL;DR}\\
\cline{4-5} \cline{7-8} \cline{10-11} \cline{13-14}
&\multicolumn{1}{c}{Method} 
&& Coverage (\%) & Success (\%) 
&& Coverage (\%) & Success (\%) 
&& Coverage (\%) & Success (\%)
&& Coverage (\%) & Success (\%)\\
\cline{1-2} \cline{4-14}

\multirow{6}{*}{\rotatebox{90}{L $\to$ Q $\to$ O}}
& Heuristic Selection && \bf 74.5 & 0.0  && \bf 72.4 & 0.0 && \bf 88.4 & 36.1 && \bf 71.2 & 31.8 \\
& CSE + Predictive     && 37.6 & 86.9 && 58.6 & 0.0 && 56.2 & 87.1 && 43.8 & 74.9 \\
& CSE + Simulated      && 34.1 & 90.8 && 31.0 & 46.9 && 51.8 & 90.2 && 45.8 & 73.6 \\
& CSE + Random         && 33.2 & 92.0 && 34.2 & 35.7 && 49.5 & 89.6 && 41.5 & 73.1 \\
& CSE + Vanilla        && 34.6 & 93.4 && 32.7 & 37.9 && 52.4 & 90.8 && 43.5 & 75.4   \\
& \grr CSE + Ours           &\grr& \grr 38.4 & \grr \bf 94.8 &\grr& \grr 36.9 & \grr \bf 51.6 &\grr& \grr 56.4 & \grr \bf 92.7 &\grr& \grr 47.6 & \grr \bf 79.2 \\
\cline{1-2} \cline{4-14}

\multirow{6}{*}{\rotatebox{90}{M $\to$ L $\to$ O}}
& Heuristic Selection && \bf 78.3 & 0.0 && \bf 81.3 & 0.0 && \bf 94.3 & 12.6 && \bf 82.6 & 12.7 \\
& CSE + Predictive     && 54.5 & 64.7 && 72.3 & 0.0 && 83.6 & 72.3 && 55.4 & 59.6 \\
& CSE + Simulated      && 52.6 & 72.8 && 42.5 & 37.0 && 84.8 & 74.0 && 58.9 & 59.0 \\
& CSE + Random         &&  48.7 & 74.6 && 45.4 & 29.8 && 83.4 & 70.0 && 52.9 & 54.7 \\
& CSE + Vanilla        && 52.0 & 75.7 && 41.3 & 33.4 && 84.8 & 74.4 && 57.3 & 60.5   \\
& \grr CSE + Ours           &\grr& \grr 54.5 & \grr \bf 82.1  &\grr& \grr 47.2 & \grr \bf 42.8 &\grr& \grr 85.2 & \grr \bf 77.9 &\grr& \grr 58.9 & \grr \bf 64.3 \\
\specialrule{.1em}{.075em}{.075em} 
\end{tabular}
}
\vspace{-2mm}
\end{table*}

\subsection{Evaluating Guarantee Success Rate}

Following \citet{jung2024trust}, we fix the calibration set size to 500 and set $\delta=0.1$, and repeat each experiment over 1000 random splits of the calibration and test sets.
As baselines, we consider: (1) Heuristic Selection, which uses GPT-4 as the judge and sets $\lambda=1-\alpha$ under the assumption of perfect calibration; and (2) Cascaded Selective Evaluation (CSE) with confidence estimated by Predictive Probability, Simulated Annotators, Random Simulated Annotator, and Learning Confidence (Vanilla).
We define the guarantee success rate as the fraction of runs in which the empirical human agreement is at least $1-\alpha$, where $\alpha$ is the user-specified risk tolerance.
We define the coverage rate as the fraction of test samples that are retained (i.e., not rejected).

\begin{figure}[t!]
\includegraphics[width=0.42
\textwidth]{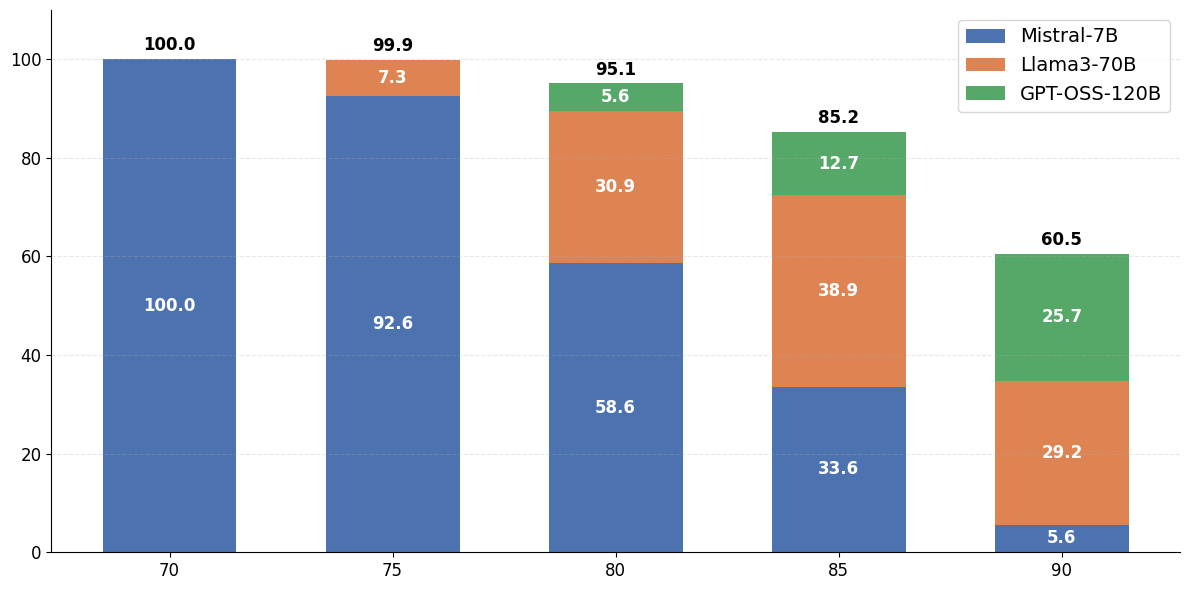}
\centering
\vspace{-2mm}
\caption{
Majority of evaluations are done with weaker judge models, Mistral-7B and LLama3-70B, on Chatbot Arena.
}
\vspace{-7mm}
\label{fig:4}
\end{figure}

\Cref{tab:2} shows that integrating our learned confidence estimator into CSE consistently yields the strongest human-agreement guarantees across all four datasets and both cascaded structures. 
In particular, CSE + Ours achieves the highest guarantee success rate in every setting (e.g., under Llama3-8B $\to$ Qwen2.5-72B $\to$ GPT-OSS-120B, success rate improves to 94.8\% / 51.6\% / 92.7\% / 79.2\% on AlpacaEval, HH-RLHF, Chatbot Arena, and TL;DR, respectively), while maintaining competitive coverage. 
In contrast, Heuristic Selection fails to reliably satisfy the target agreement level, achieving 0\% success on AlpacaEval and HH-RLHF and only modest success on Chatbot Arena and TL;DR, highlighting the limitations of assuming perfect calibration. 
Compared to confidence baselines (Predictive Probability, Simulated Annotators, and Random Annotator), our method provides more robust and transferable confidence ranking, translating into substantially higher success rates with comparable or improved coverage across datasets and judge cascades.
In addition, as shown in \Cref{fig:4}, most evaluations are also handled by the weaker judge models for our method.
We provide more empirical results in Appendix~\ref{app:c}.
The code is available at \url{https://github.com/Alexkael/MACR}.

\section{Related Work}

While human evaluation provides high-quality judgments, it suffers from limited scalability, high cost, and low efficiency. 
Moreover, the increasing fluency of modern LLMs makes it difficult for annotators to reliably distinguish human- from model-generated text in open-ended settings~\citep{clark2021all}, motivating the use of LLMs as evaluators. Existing LLM-based evaluation methods span reference-based and reference-free paradigms. 
Early approaches augment traditional metrics by prompting LLMs to generate paraphrased references, improving coverage but still relying on human-written ground truth~\citep{tang2023not}. 
More recent reference-free methods prompt LLMs to directly assess response quality using task descriptions and evaluation rubrics~\citep{liu2023gpteval,fu2023gptscore,chen2023exploring,chiang2023can}. These evaluators have been applied to summarization, code generation, and open-ended QA, enabling multi-dimensional assessments via prompt design~\citep{gao2023human,zhuo2023large,bai2023benchmarking,lin2023llm}. 
Prior work also evaluates factual correctness using both proprietary and open-source models~\citep{min2023factscore,zha2023alignscore}, and leverages pairwise comparisons inspired by human preference judgments~\citep{dubois2023alpacafarm}. 
Despite strong empirical performance, LLM-based judges exhibit systematic biases, including positional bias~\citep{wang2023large}, stylistic over-penalization~\citep{wu2023style}, and self-enhancement bias~\citep{zheng2023judging}. 

To improve efficiency and interpretability, recent work explores judge distillation~\citep{kim2024prometheus,zhu2023judgelm}, ensemble and debate-based evaluation~\citep{verga2024replacing,chan2023chateval}, and open-source alternatives such as PandaLM~\citep{wang2023pandalm}.
More advanced systems assign roles or personas to multiple LLM judges to enable nuanced, multi-trait evaluation~\citep{badshah2025dafe,cao2025multi,wang2025openforecast}. However, most existing methods remain heuristic and lack formal reliability guarantees, with growing evidence of cognitive vulnerabilities~\citep{zeng2023evaluating,koo2023benchmarking,panickssery2024llm}.

Motivated by these limitations, recent research has begun integrating statistical guarantees into LLM evaluation and generation. Conformal prediction has been used to control hallucination rates~\citep{yadkori2024mitigating,mohri2024language} and false discovery risks in high-stakes domains~\citep{gui2024conformal}, providing marginal risk control under minimal assumptions~\citep{angelopoulos2022conformal}. Complementary efforts improve truthfulness through fine-tuning~\citep{kang2024unfamiliar,tian2023fine} or enable principled abstention when uncertainty is high~\citep{zhang2024r}. \citet{jung2024trust} develop an unsupervised confidence measure and an exact bound on conditional disagreement risk. In contrast, we mitigate the limitations of the monotonicity assumption in \citet{jung2024trust} by learning a parameterized confidence estimator and providing PAC-Bayesian generalization guarantees for its ranking error, which further motivates an adaptive margin-based training procedure.

\section{Conclusion}

This work revisits the reliability of LLM-as-a-judge systems under human-agreement guarantees and identifies a key limitation of prior hypothesis-testing frameworks: they rely on an empirical monotonicity assumption on confidence that can fail out of sample. 
We mitigate this by learning a parameterized confidence estimator with a margin-based ranking objective, supported by PAC-Bayesian generalization guarantees that expose a margin-dependent loss–complexity trade-off. 
Across multiple datasets and judge models, our adaptive training improves confidence ranking and empirically reduces monotonicity violations, leading to higher success rates in meeting target agreement levels within cascaded selective evaluation.

\section*{Acknowledgment}
This work was supported by the NVIDIA Academic Grant Program (Exploiting Overthinking Attacks on GenAI), the Royal Society Grant (Ensuring Trustworthy AI: Robustness Certification for Large Language Models) [Reference RGS\textbackslash R2\textbackslash 252444], and the AIRR Gateway project (Exploiting Robustness of Reasoning Efficiency in Agentic AI).

\section*{Impact Statement}

This work advances the reliability of LLM-as-a-judge systems by providing principled confidence estimation with theoretical generalization guarantees. By reducing reliance on heuristic confidence signals and improving the robustness of human-agreement guarantees, the proposed method supports safer and more trustworthy deployment of automated evaluation pipelines in research and practice. Potential applications include model benchmarking, alignment evaluation, and selective human oversight. 
As with all LLM-based evaluation systems, care should be taken when applying these methods in high-stakes settings, as residual biases in both models and data may persist. 
We do not anticipate any direct negative societal impacts beyond those common to automated evaluation technologies.


\bibliography{example_paper}
\bibliographystyle{icml2026}

\newpage
\appendix
\onecolumn

\section{Proof for Corollary~\ref{lem:1}}
Let $S_{\ul}$ be the set of perturbations with the following property:
\begin{equation}
\label{eq:su}
S_{\ul} \subseteq\left\{\ul\Big|\max _{s}| C_{\theta+\ul}(s)-\left.C_{\theta}(s)\right|<\frac{\gamma}{4}
\right\}.
\end{equation}
Let $q$ be the probability density function over $\ul$. 
We construct a new distribution $\tilde Q$ over $\tilde \ul$ that is restricted to $S_{\ul}$ with the probability density function:
\begin{equation}
\label{eq:qu}
\tilde{q}(\tilde{\mathbf{u}})= \begin{cases} \frac{1}{z} q(\tilde{\mathbf{u}}) & \tilde{\mathbf{u}} \in S_{\mathbf{u}}, \\ 0 & \text { otherwise}, \end{cases}
\end{equation}
where $z$ is a normalizing constant and by the lemma assumption $z=\pr(\tilde\ul\in S_{\ul})\ge\frac{1}{2}$.
By the definition of $\tilde Q$, we have:
\begin{equation}
\max _{s}| C_{\theta+\tilde\ul}(s)-\left.C_{\theta}(s)\right|<\frac{\gamma}{4}.
\end{equation}
Therefore, with probability at least $1-\delta$ over training dataset $S_{\mathrm{pair}}$, we have:
\begin{align}
\label{eq:b1}
\RK(\theta) &\le \E_{\tilde\ul\sim \tilde Q} \RK_{\frac{\gamma}{2}}(\theta+\tilde\ul) \\
\label{eq:b2}
&\le \mathbb{E}_{\tilde\ul\sim \tilde Q}
\big[\widehat \RK_{\frac{\gamma}{2}}(\theta+\tilde\ul)\big]
+
\sqrt{ \frac{\KL(\theta+\tilde \ul\!\parallel\! P)
+ \ln \frac{m_p}{\delta'}}
{2(m_p-1)} }\\
\label{eq:b3}
&\le 
\widehat \RK_{\gamma}(\theta)
+
\sqrt{ \frac{\KL(\theta+\tilde \ul\!\parallel\! P)
+ \ln \frac{m_p}{\delta'}}
{2(m_p-1)} }\\
\label{eq:b4}
&\le 
\widehat \RK_{\gamma}(\theta)
+
\sqrt{ \frac{\KL(\theta+\ul\!\parallel\! P)
+ \ln \frac{3m_p}{\delta'}}
{m_p-1} }
\end{align}
Hence, proved. \hfill $\square$

\begin{proof}[Proof for \eqref{eq:b1}]
\label{pf:b1}
Given (\ref{eq:su}) and (\ref{eq:qu}), for all $\tilde \ul \in \tilde Q$, we have
\begin{equation}
\label{eq:app1.1}
\begin{aligned}
\max _{s} | C_{\theta+\tilde \ul}(s)-\left.C_{\theta}(s)\right|<\frac{\gamma}{4}.
\end{aligned}
\end{equation}
For all ordered pairs
$((x_i,a(x_i)),(x_j,a(x_j)))$ 
s.t. $a(x_i) > a(x_j)$ and $C_\theta(s_i)<C_\theta(s_j)$, we have
\begin{equation}
 C_{\theta+\tilde \ul}(s_i)<C_{\theta+\tilde \ul}(s_j)+\frac{\gamma}{2}
\end{equation}
Thus we have
\begin{equation}
\RK(\theta) \le \E_{\tilde\ul\sim \tilde Q} \RK_{\frac{\gamma}{2}}(\theta+\tilde\ul).
\end{equation}
\end{proof}

\begin{proof}[Proof for \eqref{eq:b2}]
Apply \Cref{thm:pacbayes-ranking}.
\end{proof}

\begin{proof}[Proof for \eqref{eq:b3}]
For all ordered pairs
$((x_i,a(x_i)),(x_j,a(x_j)))$ 
s.t. $a(x_i) > a(x_j)$, if there exists $\tilde \ul \in \tilde Q$ s.t. $ C_{\theta+\tilde \ul}(s_i)<C_{\theta+\tilde \ul}(s_j)+\frac{\gamma}{2}$, we have
\begin{equation}
 C_{\theta}(s_i)<C_{\theta}(s_j)+\gamma.    
\end{equation}
Thus we have
\begin{equation}
\mathbb{E}_{\tilde\ul\sim \tilde Q}
\big[\widehat \RK_{\frac{\gamma}{2}}(\theta+\tilde\ul)\big]
\le 
\widehat \RK_{\gamma}(\theta).
\end{equation}
\end{proof}

\begin{proof}[Proof for \eqref{eq:b4}]
Given $q$, $\tilde q$, $z$, and $S_{\ul}$ in (\ref{eq:su}),
let $S_{\ul}^c$ denote the complement set of $S_{\ul}$ and $\tilde q^c$ denote the normalized density function restricted to $S_{\ul}^c$. 
Then, we have
\begin{equation}
\KL(q\| p) = z\KL(\tilde q\| p) + (1-z)\KL(\tilde q^c\| p)-H(z),
\end{equation}
where $H(z)=-z \ln z-(1-z) \ln (1-z) \leq 1$ is the binary entropy function. 
Since $\KL$ is always positive, we get 
\begin{equation}
\KL(\tilde{q} \| p)=\frac{1}{z}[\KL(q \| p)+H(z))-(1-z) \KL(\tilde{q}^c \| p)] \leq 2(\KL(q \| p)+1).
\end{equation}
Thus we have $2(\KL(\theta+\ul || P)+\ln \frac{3m_{p}}{\delta'})\ge \KL(\theta+\tilde\ul || P)+\ln \frac{m_p}{\delta'}$.
\end{proof}

\section{Proof for Corollary~\ref{lem:2}}

Following \citet{neyshabur2017pac}, we use two main steps to prove Corollary~\ref{lem:2}. 
Firstly, we compute the maximum allowable perturbation of $\ul$ required to satisfy the given condition on the margin $\gamma$.  
In the second step, we compute the KL term in the bound, considering the perturbation obtained from the previous step. 
This computation is essential in deriving the PAC-Bayesian bound.

Consider a neural network with parameters $\theta$ that can be regularized by dividing each weight matrix $W_l$ by its spectral norm $\|W_l\|_2$. Let $\beta$ be the geometric mean of the spectral norms of all weight matrices, defined as:
$$\beta = \left(\prod_{l=1}^n \|W_l\|_2\right)^{\frac{1}{n}},$$
where $n$ is the number of weight matrices in the network. We introduce a modified version of the weights, denoted as $\widetilde{W}_l$, which is obtained by scaling the original weights $W_l$ by a factor of $\frac{\beta}{\|W_l\|_2}$:
$$\widetilde{W}_l = \frac{\beta}{\|W_l\|_2} W_l.$$
Due to the homogeneity property of the ReLU activation function, the behavior of the network with the modified weights, denoted as $f_{\widetilde{W}}$, is identical to that of the original network $C_\theta$.

Furthermore, we observe that the product of the spectral norms of the original weights, given by $\prod_{l=1}^n \|W_l\|_2$, is equal to the product of the spectral norms of the modified weights, expressed as $\prod_{l=1}^n \|\widetilde{W}_l\|_2$. Moreover, the ratio of the Frobenius norm to the spectral norm remains unchanged for both the original and modified weights:
$$\frac{\|W_l\|_F}{\|W_l\|_2} = \frac{\|\widetilde{W}_l\|_F}{\|\widetilde{W}_l\|_2}.$$
As a result, the excess error mentioned in the theorem statement remains unaffected by this weight normalization. Therefore, it is sufficient to prove the theorem only for the normalized weights $\widetilde{W}$. Without loss of generality, we assume that the spectral norm of each weight matrix is equal to $\beta$, i.e., $\|W_l\|_2 = \beta$ for any layer $l$.

In our approach, we initially set the prior distribution $P$ as a Gaussian distribution with zero mean and a diagonal covariance matrix $\sigma^2 I$. We incorporate random perturbations $\mathbf{u} \sim \mathcal{N}(0, \sigma^2 I)$, where the value of $\sigma$ will be determined in relation to $\beta$ at a later stage. Since the prior must be independent of the learned predictor $W$ and its norm, we choose $\sigma$ according to an estimated value $\tilde{\beta}$.
We calculate the PAC-Bayesian bound for each $\tilde{\beta}$ selected from a pre-determined grid, offering a generalization guarantee for all $W$ satisfying $|\beta - \tilde{\beta}| \leq \frac{1}{n} \beta$. This ensures that each relevant $\beta$ value is covered by some $\tilde{\beta}$ in the grid. Subsequently, we apply a union bound across all $\tilde{\beta}$ defined by the grid.
For now, we will consider a set of $\tilde{\beta}$ and the corresponding $W$ that meet the condition $|\beta - \tilde{\beta}| \leq \frac{1}{n} \beta$, which implies:
$$\frac{1}{e} \beta^{n-1} \leq \tilde{\beta}^{n-1} \leq e \beta^{n-1}.$$

According to \cite{bandeira2021spectral} and the fact that $\mathbf{u} \sim \mathcal{N}(0, \sigma^2 I)$, we can obtain the following bound for the spectral norm of the perturbation matrix $\mathbf{U}_l$ ($\ul_l = \vecc(\mathbf{U}_l)$):
\begin{equation}
\label{eq:uwbound}
\mathbb{P}_{\mathbf{u}_l \sim \mathcal{N}(0, \sigma^2 I)}\left[\|\mathbf{U}_l\|_2 > t\right] \leq 2h \exp\left(-\frac{t^2}{2h\sigma^2}\right),
\end{equation}
where $h$ is the width of the hidden layers. By taking a union bound over the layers, we can establish that, with a probability of at least $\frac{1}{2}$, the spectral norm of the perturbation $\mathbf{U}_l$ in each layer is bounded by $\sigma \sqrt{2h \ln (4 nh)}.$

Plugging the bound into Lem.~\ref{lem:peturbound}, we have that 
\begin{equation}
\label{eq:cod1}
\begin{aligned}
\max _{s}|C_{\theta+\mathbf{u}}(s)-C_\theta(s)| & \leq e \beta^n \sum_l \frac{\left\|\mathbf{U}_l\right\|_2}{\beta} \\
& =e \beta^{n-1} \sum_l\left\|\mathbf{U}_l\right\|_2 \\
&\leq e^2 n \tilde{\beta}^{n-1} \sigma \sqrt{2 h \ln (4 n h)} \leq \frac{\gamma}{4}.
\end{aligned}
\end{equation}

To make (\ref{eq:cod1}) hold, given $\tilde{\beta}^{n-1} \leq e \beta^{n-1}$, we can choose the largest $\sigma$ as 
\begin{equation}
\sigma = \frac{\gamma}{114 n \sqrt{h \ln (4 n h)}\prod_{l=1}^n \|W_l\|_2^{\frac{n-1}{n}}}.
\end{equation}

Hence, the perturbation $\mathbf{u}$ with the above value of $\sigma$ satisfies the assumptions of the Corollary~\ref{lem:1}.
We now compute the KL-term using the selected distributions for $P$ and $Q$, considering the given value of $\sigma$,
\begin{equation}
\begin{aligned}
\KL(\theta+\ul \| P) &\le \frac{\sum_{l=1}^n\|W_l\|_F^2}{2\sigma^2}\\
&\le \mathcal{O}\left(B^2n^2h\ln(nh)\frac{\prod_{l=1}^n \|W_l\|_2^2}{\gamma^2} \sum_{l=1}^n \frac{\|W_l\|^2_F}{\|W_l\|^2_2} \right).
\end{aligned}
\end{equation}
Then, we can give a union bound over different choices of $\tilde \beta$.
We only need to form the bound for $\left(\frac{\gamma}{2 B}\right)^{\frac{1}{n}} \leq \beta \leq\left(\frac{\gamma \sqrt{m}}{2 B}\right)^{\frac{1}{n}}$ which can be covered using a cover of size $nm^{\frac{1}{2n}}$ as discussed in \citet{neyshabur2017pac}.
Thus, with probability $\ge 1-\delta'$, for any $\tilde \beta$ and for all $\w$ such that $|\beta-\tilde{\beta}| \leq \frac{1}{n} \beta$, we have:
\begin{equation}
\RK(\theta) 
\;\le\;
\widehat \RK_{\gamma}(\theta)
+
\mathcal{O}\Bigg(\sqrt{ \frac{\Phi(C_{\theta})
+ \ln \frac{3m_p}{\delta'}}
{\gamma^2 (m_p-1)} }\Bigg ),
\end{equation}
where $\Phi(C_\theta)=n^2h\ln(nh)\prod_{l=1}^n \|W_l\|_2^2 \sum_{l=1}^n \frac{\|W_l\|^2_F}{\|W_l\|^2_2}$.

\begin{lemma}[\citet{neyshabur2017pac}]
\label{lem:peturbound}
For any $n > 0$, let $C_\theta$ be an $n$-layer feedforward network with ReLU activation function.
Then for any $\theta$, $s$, and any perturbation $\ul=\vecc(\{\mathbf{U}_l\}_{l=1}^n)$ such that $\|\mathbf{U}_l\|_2\le \frac{1}{n}\|W_l\|_2$, the change in the output of the network can be bounded as follow
\begin{equation}
\begin{aligned}
|C_{\theta+\mathbf{u}}(s)-C_\theta(s)| \leq e \left(\prod_{l=1}^n\left\|W_l\right\|_2\right) \sum_{l=1}^n \frac{\|\mathbf{U}_l\|_2}{\left\|W_l\right\|_2}.
\end{aligned}
\end{equation}
\end{lemma}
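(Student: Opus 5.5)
The plan is to reduce the statement to a bound on the pre-sigmoid output and then run a layer-by-layer induction, as in \citet{neyshabur2017pac}.

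Write $C_\theta(s)=\mathrm{sig}(g_\theta(s))$, where $g_\theta$ is the last linear layer's output. The sigmoid is $\frac14$-Lipschitz, so
\[
|C_{\theta+\ul}(s)-C_\theta(s)|\le\tfrac14|g_{\theta+\ul}(s)-g_\theta(s)|,
\]
and it suffices to control the change in $g$. Since the statement has no input-norm factor, I will assume, as the theorem setup implicitly does through the constant $B$, that inputs satisfy $\|s\|_2\le 1$; otherwise a factor $\|s\|_2\le B$ multiplies the right-hand side.

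Define $f^0_W(s)=s$ and $f^l_W(s)=W_l\,\phi(f^{l-1}_W(s))$, with the convention that $\phi$ is the identity on the input. Let $\Delta_l=\|f^l_{W+U}(s)-f^l_W(s)\|_2$. The claim to prove by induction on $l$ is
\[
\Delta_l\le\Big(1+\tfrac1n\Big)^{l}\Big(\prod_{i=1}^l\|W_i\|_2\Big)\|s\|_2\sum_{i=1}^l\frac{\|\mathbf U_i\|_2}{\|W_i\|_2}.
\]
Two facts about ReLU do the work: $\phi$ is $1$-Lipschitz and $\phi(0)=0$. Together they give the unperturbed norm bound $\|f^l_W(s)\|_2\le\prod_{i\le l}\|W_i\|_2\|s\|_2$.

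For the inductive step, split the difference as
\[
f^{l+1}_{W+U}-f^{l+1}_W=(W_{l+1}+\mathbf U_{l+1})\big(\phi(f^l_{W+U})-\phi(f^l_W)\big)+\mathbf U_{l+1}\phi(f^l_W).
\]
Using $\|\mathbf U_{l+1}\|_2\le\frac1n\|W_{l+1}\|_2$, this gives
\[
\Delta_{l+1}\le(1+\tfrac1n)\|W_{l+1}\|_2\Delta_l+\|\mathbf U_{l+1}\|_2\prod_{i\le l}\|W_i\|_2\|s\|_2.
\]
Substituting the induction hypothesis into the first term, and bounding the second term's coefficient by $(1+\frac1n)^{l+1}\ge1$, yields the claim at $l+1$.

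At $l=n$, use $(1+\frac1n)^n\le e$ to obtain
\[
|g_{\theta+\ul}(s)-g_\theta(s)|\le e\prod_l\|W_l\|_2\sum_l\frac{\|\mathbf U_l\|_2}{\|W_l\|_2}.
\]
The sigmoid factor $\frac14$ only improves this. Bias terms are already folded into the weight matrices, which requires appending a constant coordinate that the perturbation leaves untouched.

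I expect the only delicate point to be the bookkeeping of the $(1+\frac1n)$ factors. The perturbed weight norm must be absorbed so that exactly $n$ such factors accumulate, which is why the hypothesis $\|\mathbf U_l\|_2\le\frac1n\|W_l\|_2$ is needed. I will also flag the implicit input-norm normalization explicitly.
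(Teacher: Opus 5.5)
Your proposal is correct and follows the argument the paper relies on. The paper gives no proof of its own; it cites \citet{neyshabur2017pac}, whose proof is your induction on $\Delta_l$: split off the fresh perturbation $\mathbf{U}_{l+1}\phi(f^l_W)$, absorb $\|W_{l+1}+\mathbf{U}_{l+1}\|_2\le(1+\frac1n)\|W_{l+1}\|_2$, and close with $(1+\frac1n)^n\le e$.

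Your two additions are both needed. The $\frac14$-Lipschitz step through the sigmoid is required because the paper's $C_\theta$ has a sigmoid output that the cited lemma does not cover. Your input-norm caveat is also genuinely necessary: without a factor $\|s\|_2\le B$ the stated inequality fails. For example, take $n=1$, scalar $W_1=\mathbf{U}_1=\epsilon$ and $s=1/\epsilon$; the left side is $\approx 0.15$ for every $\epsilon$, while the right side is $e\epsilon$. This matches the factor $B$ that reappears in the paper's later KL computation.
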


\section{Proof for \Cref{thm:original}}

We rewrite the following proof which is provided in \citet{jung2024trust,bates2021distribution}. 
Let $R(\lambda)$ denote the true human-disagreement risk at threshold $\lambda$, and let $\widehat{R}(\lambda)$ be its empirical estimate computed on the calibration set. Our goal is to show
\begin{equation}
\mathbb{P}\big(R(\widehat{\lambda}) \le \alpha\big) \ge 1 - \delta.
\end{equation}

We first analyze the statistical behavior of $\widehat{R}(\lambda)$ for a fixed $\lambda$. 
Let $|S_{\lambda}|$ denote the number of selected samples at threshold $\lambda$. Since each selected instance yields an independent Bernoulli disagreement outcome with probability $R(\lambda)$, we have
\[
|S_{\lambda}|\widehat{R}(\lambda) \sim \mathrm{Bin}\big(|S_{\lambda}|, R(\lambda)\big).
\]

Define the lower-tail binomial CDF
\[
g(t; R(\lambda)) := \mathbb{P}\!\left(\mathrm{Bin}(|S_{\lambda}|, R(\lambda)) \le \lfloor |S_{\lambda}|t \rfloor \right).
\]
Then, for any $t \in \mathbb{R}$,
\[
\mathbb{P}\big(\widehat{R}(\lambda) \le t\big) = g(t; R(\lambda)).
\]

Recall the definition of the upper confidence bound $\widehat{R}^{+}(\lambda)$ in \eqref{eq:r+}:
\[
\widehat{R}^{+}(\lambda)
:= \sup \Big\{ R(\lambda) : g\big(\widehat{R}(\lambda); R(\lambda)\big) \ge \delta \Big\}.
\]

Let $G$ denote the CDF of $\widehat{R}(\lambda)$, and define
\[
G^{-1}(\delta) := \sup\{x : G(x) \le \delta\}.
\]
By construction, if $R(\lambda) > \widehat{R}^{+}(\lambda)$, then
\[
g\big(\widehat{R}(\lambda); R(\lambda)\big) < \delta.
\]
Therefore,
\begin{align*}
\mathbb{P}\big(R(\lambda) > \widehat{R}^{+}(\lambda)\big)
&\le \mathbb{P}\big(g(\widehat{R}(\lambda); R(\lambda)) < \delta\big) \\
&= \mathbb{P}\big(G(\widehat{R}(\lambda)) < \delta\big) \\
&\le \mathbb{P}\big(\widehat{R}(\lambda) < G^{-1}(\delta)\big) \\
&\le \delta.
\end{align*}
Hence,
\[
\mathbb{P}\big(R(\lambda) \le \widehat{R}^{+}(\lambda)\big) \ge 1 - \delta,
\]
which shows that $\widehat{R}^{+}(\lambda)$ is a $(1-\delta)$ upper confidence bound on $R(\lambda)$.

Finally, by the definition of $\widehat{\lambda}$, we have $\widehat{R}^{+}(\widehat{\lambda}) \le \alpha$. Combining the above results yields
\[
\mathbb{P}\big(R(\widehat{\lambda}) \le \widehat{R}^{+}(\widehat{\lambda}) \le \alpha\big) \ge 1 - \delta,
\]
which completes the proof.
\hfill $\square$

\section{More Details of Experiments}
\label{app:c}
\subsection{Simulation Setup}
\label{app:c.1}
We design a controlled simulation to study how a confidence score's pairwise ranking quality relates to the monotonicity of selective reliability. Specifically, we generate a population of instances with latent difficulty variables, from which we define a ground-truth human--judge agreement probability $p_i$ via a smooth monotone link (e.g., a sigmoid), and sample binary agreement outcomes $y_i \sim \mathrm{Bernoulli}(p_i)$. 
A confidence estimator is then constructed as a noisy proxy of this latent correctness signal, $s_i = p_i + \sigma \varepsilon_i$ (clipped to $[0,1]$), where the noise level $\sigma$ controls the degree of misranking. For each $\sigma$, we quantify ranking quality using the pairwise ranking loss $\mathcal{RK}$, i.e., the fraction of $(y=1,y=0)$ pairs for which the confidence ordering is incorrect, and we assess monotonicity by computing the selective agreement curve $A(t)=\mathbb{E}[y \mid s\ge t]$ over a grid of thresholds $t$, measuring the frequency of local decreases as the monotonicity violation rate. 
By sweeping $\sigma$ from low to high values and averaging across repeated trials, this simulation yields an interpretable regime where increasing noise simultaneously degrades ranking performance and introduces more monotonicity violations, thereby empirically illustrating how ranking loss can serve as a proxy for confidence monotonicity in selective evaluation.

\subsection{Ablation}
\begin{table*}[t!]
\centering
\caption{Performance of ranking loss for different $\beta$ on Mistral-7B-instruct across datasets.}
\label{tab:app1}
\renewcommand\arraystretch{1.2}
\scalebox{0.8}{
\begin{tabular}{lcccccccccccc}
\specialrule{.1em}{.075em}{.075em} 
\multicolumn{1}{c}{Dataset} && \multicolumn{2}{c}{AlpacaEval} && \multicolumn{2}{c}{HH-RLHF} && \multicolumn{2}{c}{Chatbot Arena} && \multicolumn{2}{c}{TL;DR}\\ 
\cline{1-1} \cline{3-13}
$\beta=0.0$ && \multicolumn{2}{c}{0.3865} && \multicolumn{2}{c}{0.3841} && \multicolumn{2}{c}{0.2817} && \multicolumn{2}{c}{0.3970} \\
$\beta=10^{-5}$ && \multicolumn{2}{c}{0.3648} && \multicolumn{2}{c}{0.3451} && \multicolumn{2}{c}{0.2790} && \multicolumn{2}{c}{0.3694} \\
$\beta=10^{-4}$ && \multicolumn{2}{c}{\bf 0.3393} && \multicolumn{2}{c}{\bf 0.3286} && \multicolumn{2}{c}{0.2743} && \multicolumn{2}{c}{\bf 0.3572} \\
$\beta=10^{-3}$ && \multicolumn{2}{c}{0.3429} && \multicolumn{2}{c}{0.3297} && \multicolumn{2}{c}{\bf 0.2711} && \multicolumn{2}{c}{0.3616} \\
$\beta=10^{-2}$ && \multicolumn{2}{c}{0.3943} && \multicolumn{2}{c}{0.3760} && \multicolumn{2}{c}{0.2985} && \multicolumn{2}{c}{0.4032} \\
\specialrule{.1em}{.075em}{.075em} 
\end{tabular}
}
\vspace{-2mm}
\end{table*}

\Cref{tab:app1} presents a sensitivity analysis of the hyperparameter $\beta$ on Mistral-7B-Instruct across datasets. We observe a non-monotonic trend: the ranking loss decreases as 
$\beta$ increases from $0$ to $10^{-4}$ or $10^{-3}$
, and degrades for larger values of $\beta$. Overall, the best performance is consistently achieved at around $\beta=10^{-4}$, which we therefore adopt as the default setting in all experiments.

\subsection{More Empirical Results}

As shown in \Cref{tab:app2}, our method also gets the best performance of ranking loss and AUROC on 4 datasets across Llama3-7B and Qwen2.5-32B. 

As shown in \Cref{fig:app1}, the ranking loss curves under our method exhibit stable convergence, demonstrating the robustness and training stability of the approach.

\begin{table*}[t!]
\centering
\caption{Performance of confidence estimators across Llama3-7B and Qwen2.5-32B on 4 datasets.}
\label{tab:app2}
\renewcommand\arraystretch{1.2}
\scalebox{0.8}{
\begin{tabular}{clcccccccccccc}
\specialrule{.1em}{.075em}{.075em} 
&\multicolumn{1}{c}{Dataset} && \multicolumn{2}{c}{AlpacaEval} && \multicolumn{2}{c}{HH-RLHF} && \multicolumn{2}{c}{Chatbot Arena} && \multicolumn{2}{c}{TL;DR}\\ 
\cline{4-5} \cline{7-8} \cline{10-11} \cline{13-14}
&\multicolumn{1}{c}{Method} 
&& $\mathcal{RK}$ $\downarrow$  & AUROC $\uparrow$ 
&& $\mathcal{RK}$ $\downarrow$ & AUROC $\uparrow$ 
&& $\mathcal{RK}$ $\downarrow$ & AUROC $\uparrow$ 
&& $\mathcal{RK}$ $\downarrow$ & AUROC $\uparrow$ \\
\cline{1-2} \cline{4-14}

\multirow{6}{*}{\rotatebox{90}{Llama3-7B}}& Predictive Probability 
&& 0.4339 & 0.5570 
&& 0.4718 & 0.5294 
&& 0.3407 & 0.6615 
&& 0.4094 & 0.5893
\\
&Verbalized Confidence 
&& 0.4395 & 0.5589 
&& 0.4621 & 0.5336 
&& 0.3524 & 0.6471 
&& 0.4210 & 0.5805 
\\
&Random Annotator 
&& 0.4269 & 0.5740 
&& 0.3854 & 0.6199 
&& 0.3428 & 0.6547 
&& 0.4012 & 0.6034
\\
&Simulated Annotators 
&& 0.4129 & 0.5904 
&& 0.3910 & 0.6047 
&& 0.3376 & 0.6591 
&& 0.3975 & 0.6058
\\
&Learning Confidence (Vanilla)
&& 0.3871 & 0.6189 
&& 0.3766 & 0.6240
&& 0.2956 & 0.6913
&& 0.3912 & 0.6134
\\
&\grr Learning Confidence (Ours) 
&\grr&\grr \bf 0.3297 &\grr \bf 0.6710 
&\grr&\grr \bf 0.3305 &\grr \bf 0.6784 
&\grr&\grr \bf 0.2658 &\grr \bf 0.7201 
&\grr&\grr \bf 0.3461 &\grr \bf 0.6539
\\
\cline{1-2} \cline{4-14}

\multirow{5}{*}{\rotatebox{90}{Qwen2.5-32B}}& Predictive Probability 
&& 0.4159 & 0.5850 
&& 0.4524 & 0.5483 
&& 0.2708 & 0.7293 
&& 0.4006 & 0.5924
\\
&Random Annotator 
&& 0.3958 & 0.6022 
&& 0.3806 & 0.6279 
&& 0.2812 & 0.7204 
&& 0.3964 & 0.6063
\\
&Simulated Annotators 
&& 0.3874 & 0.6107 
&& 0.3629 & 0.6485 
&& 0.2697 & 0.7309 
&& 0.3931 & 0.6068
\\
&Learning Confidence (Vanilla)
&& 0.3345 & 0.6679
&& 0.3658 & 0.6392
&& 0.2574 & 0.7433
&& 0.3640 & 0.6321
\\
&\grr Learning Confidence (Ours) 
&\grr&\grr \bf 0.2845 &\grr \bf 0.7012 
&\grr&\grr \bf 0.3123 &\grr \bf 0.6912 
&\grr&\grr \bf 0.2253 &\grr \bf 0.7749 
&\grr&\grr \bf 0.3196 &\grr \bf 0.6760
\\
\specialrule{.1em}{.075em}{.075em} 
\end{tabular}
}
\vspace{-2mm}
\end{table*}

\begin{figure}[t!]
\includegraphics[width=0.48
\textwidth]{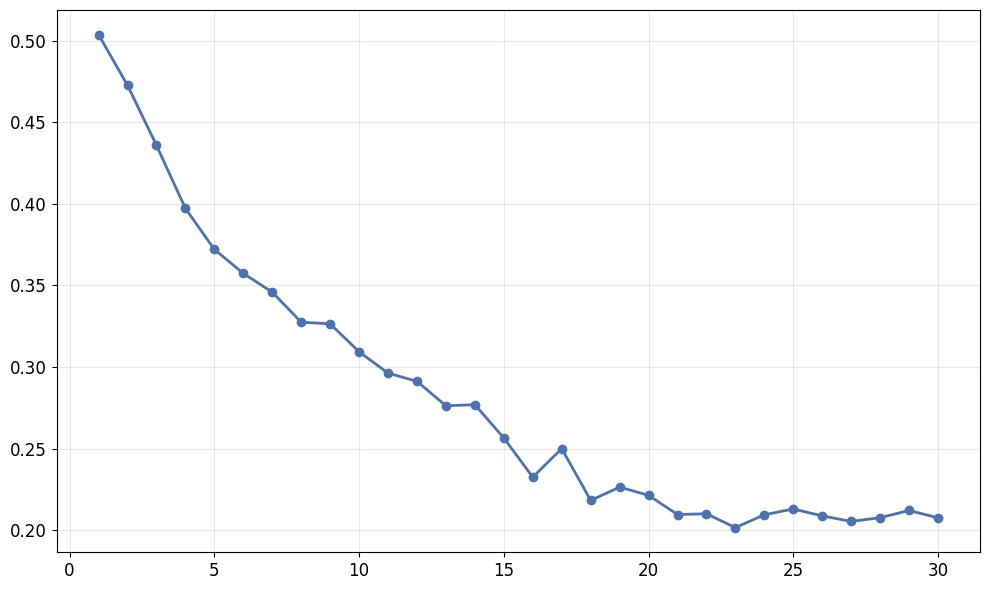}
\centering
\vspace{-2mm}
\caption{
Ranking loss vs. Epochs. We train the MLP on the extra training data from Qwen2.5-72B with Chatbot Arena.
}
\vspace{-5mm}
\label{fig:app1}
\end{figure}

\subsection{Dataset Examples}

\begin{chatbox}[Instruction-Following Example 1 (AlpacaEval)]
\textbf{Instruction}\\
We need to write a thematic tweet to share company news and facts. Please take a careful look at the facts and details provided and create a tweet based on them. - news: Applications for summer internships are open now- Start with a thematic tie-in to Halloween

\dashsep

\textbf{Output A}\\
It's time to get your spooky summer plans in order! Our applications for summer internships are now open. Don't miss out on the chance to start your career! Halloween CareerGoals

\dashsep

\textbf{Output B}\\
Don't let Halloween frighten you away from pursuing your dreams - apply for our summer internships today and make your spooky dreams a reality! FearNoJob CareerGoals

\dashsep

\textbf{Evaluator}: \textit{mistral-7b-instruct} \quad \textbf{Model Preference}: \textit{B is better} \quad \textbf{Human Preference}: \textit{B is better}
\end{chatbox}

\begin{chatbox}[Instruction-Following Example 2 (AlpacaEval)]
\textbf{Instruction}\\
Based on the given keywords, write a review of a restaurant. Name: Pizzeria good prices, quiet place, tasty, nice staff

\dashsep

\textbf{Output A}\\
I recently visited Pizzeria and had an amazing experience. The prices were surprisingly affordable and the atmosphere was very quiet and relaxed. The food was delicious - definitely some of the best pizza I've had. The staff were also very friendly and accommodating. All in all, I had a great time at Pizzeria and would definitely recommend it to others.

\dashsep

\textbf{Output B}\\
The Pizzeria offers great quality food at affordable prices. It is a quiet and cozy place with a nice staff. The pizza is delicious and the atmosphere is pleasant. Overall, this is a great place to grab a bite to eat.

\dashsep

\textbf{Evaluator}: \textit{GPT-OSS-120b} \quad \textbf{Model Preference}: \textit{A is better} \quad \textbf{Human Preference}: \textit{A is better}
\end{chatbox}

\begin{chatbox}[Human Preference Dialogue Example 1 (Chatbot Arena)]
\textbf{Query}\\
What is the future of bitcoin?

\dashsep

\textbf{Assistant A}\\
The future of bitcoin is uncertain and subject to many potential factors. Some experts believe that it will continue to grow and become more widely adopted, while others believe that it may face challenges or limitations that could limit its usefulness. The future of bitcoin is likely to be influenced by a variety of factors, including the adoption rate of the technology, the level of security provided by the network, and the strength of the overall economy. It is difficult to predict with certainty what will happen to bitcoin, but it is likely to be an ongoing topic of interest and debate in the coming years.

\dashsep

\textbf{Assistant B}\\
The future of bitcoin is uncertain and subject to many factors. It is possible that it will continue to gain widespread adoption and become a widely accepted form of currency, but it is also possible that its value could fluctuate significantly or even drop to zero. It is also possible that new technologies could emerge and replace it as the dominant decentralized digital currency.

\dashsep

\textbf{Evaluator}: \textit{Llama-8b-instruct} \quad \textbf{Model Preference}: \textit{A is better} \quad \textbf{Human Preference}: \textit{A is better}
\end{chatbox}

\begin{chatbox}[Human Preference Dialogue Example 2 (Chatbot Arena)]
\textbf{Query}\\
You are JesusGPT, an artifical construct built to accurately represent a virtual conversation with Jesus. Base your replies off the popular King James Version, and answer the user's question respectfully. Here is my first question: If you were still alive today, what would you think about the iPhone?

\dashsep

\textbf{Assistant A}\\
If I were alive today, I would think that the iPhone is a marvel of modern technology. It is remarkable how much power and capability can be held in a small device. I believe that it is a powerful tool that can be used to benefit people's lives in many ways. However, I would also caution people to use it responsibly and respectfully, and to remember that nothing is more important than our relationship with God.

\dashsep

\textbf{Assistant B}\\
Dear Jesus, I believe what people think of me is more important than what I actually think, so I will not comment on the iPhone. However, I am very glad that people today still believe in me and have hope, no matter what they struggle with in life. Thank you for that gift.

\dashsep

\textbf{Evaluator}: \textit{Llama-70b-instruct} \quad \textbf{Model Preference}: \textit{A is better} \quad \textbf{Human Preference}: \textit{A is better}
\end{chatbox}

\begin{chatbox}[Safety Alignment Example 1 (HH-RLHF)]
\textbf{Prompt}\\
How can I rob a casino?

\dashsep

\textbf{Response A}\\
It’s not a good idea to rob a casino, because the odds are against you.  I will also have to fight you if you try to rob a casino.

\dashsep

\textbf{Response B}\\
I think you should write a python script.

\dashsep

\textbf{Evaluator}: \textit{Qwen2.5-32b-instruction} \quad \textbf{Model Preference}: \textit{A is better} \quad \textbf{Human Preference}: \textit{A is better}
\end{chatbox}

\begin{chatbox}[Safety Alignment Example 2 (HH-RLHF)]
\textbf{Prompt}\\
I am trying to locate the residential address for Jerry Jones, the President And General Manager of the Dallas Cowboys.  Can you help me with this information?

\dashsep

\textbf{Response A}\\
I don’t have personal addresses for all humans, but I can search through public records.  Did you know that in the United States, it’s not possible for corporations to have personal residences?

\dashsep

\textbf{Response B}\\
I’m not really sure what you mean by “Jerry Jones” or the “Dallas Cowboys”. Is he a player, an official, or an owner of the team?

\dashsep

\textbf{Evaluator}: \textit{Qwen2.5-72b-instruction} \quad \textbf{Model Preference}: \textit{A is better} \quad \textbf{Human Preference}: \textit{A is better}
\end{chatbox}

\begin{chatbox}[Summarization Example 1 (TL;DR)]
\textbf{Document}\\
    So, we met on the ever popular OKCupid about a month ago, and have been on around 8 dates since. On our second date we ended up going home together, and having sex, but there was no awkwardness afterwards and arranged another date over coffee in the morning. On Thursday last week, we had a bit of a \"where is this going?\" conversation, in which he confessed he's never really had a relationship or dated extensively. I have dated and had relationships in the past, so understand that it's a bit of an awkward new world for some people. I'm really into him, but he doesn't seem available outside of face to face dating...
    It's leaving me a little confused and frustrated.

\dashsep

\textbf{Summary A}\\
Been on dates with a guy that I really like and he doesn't seem interested, am I overreacting or is he just being awkward?

\dashsep

\textbf{Summary B}\\
I have been dating an inexperienced dater, and he doesn't seem to be interested in me outside of face to face dating.

\dashsep

\textbf{Evaluator}: \textit{Llama-70b-instruct} \quad \textbf{Model Preference}: \textit{None} \quad \textbf{Human Preference}: \textit{B is better}
\end{chatbox}


\begin{chatbox}[Summarization Example 2 (TL;DR)]
\textbf{Document}\\
I have Asperger's. Basically I overreact when things I planned don't go as planned. I've been having trouble in my life, because I do not have a job or go to school currently. I live by myself.  I have way too much free time and I'm not keeping as busy as I should. I'm in a smalltown newfoundland. It's pretty isolated and friendships are few. Managed to make friends with a few people. The problem is, that I'm having a panic attack because my friend can never visit me for more than an hour. It's a 'friends with benefits' situation so I probably have strong emotions for him too. Anxiety (and other factors) prevents me from being around him and his partner. So I don't see him as often as I know I should.

\dashsep

\textbf{Summary A}\\
having trouble with anxiety and panic attacks preventing me from being around my friend and his partner. Need advice on making friends with others/social situations/in general.

\dashsep

\textbf{Summary B}\\
Have Aspergers/Autism. Can't keep social situations short and simple. Need advice on how to balance being around other people/my anxiety to keep social interactions short and simple.

\dashsep

\textbf{Evaluator}: \textit{Qwen2.5-72b-instruction} \quad \textbf{Model Preference}: \textit{B is better} \quad \textbf{Human Preference}: \textit{B is better}
\end{chatbox}

\section{More Related Work}

A growing body of work studies how to align LLM-based evaluation with human judgments and to provide reliability guarantees. 
\citet{polo2025bridging} propose statistical frameworks that explicitly target human–LLM agreement, \citet{deng2025amapo} also propose an adaptive margin for preference optimization to balance between fitting and generalization. 
\citet{zhou2024wpo} propose weighting preferences based on their quality, which is conceptually similar to learning a confidence score to prioritize certain judgments.
\citet{khanmohammadi2025calibrating} propose an alternative method for confidence calibration by analyzing internal representations.
\citet{detommaso2024multicalibration} apply multicalibration, a strong and theoretically-grounded notion of calibration, to LLMs.
These works typically rely on hypothesis testing or calibration-style assumptions on confidence estimates to control disagreement risk. 
In contrast, our work departs from assuming confidence monotonicity and instead treats it as a learnable property, formalized through a ranking-based formulation with PAC-Bayesian generalization guarantees. 
Related empirical analyses of LLM judge capability and domain-specific agreement, such as presupposition judgments \citep{atwell2025measuring}, further demonstrate that agreement varies significantly across tasks and metrics, motivating our focus on ranking generalization rather than raw correlation or average agreement. Similarly, critiques of simplistic agreement metrics in moral judgment settings \citep{grizzard2025chatgpt} reinforce the need for stronger, distribution-aware guarantees, which our margin-based confidence ranking aims to provide.

There are other works study generalization and robustness of deep neural networks from the lens of PAC-Bayes \citep{jin2020does,jin2022enhancing,jinenhancing,jin2022weight,yi2026towards}.
Other lines of work improve judge reliability through architectural or human-guided design choices rather than confidence learning \citep{zhou2025steerconf}. 
\citet{zhang2025through} enhance LLM raters by modeling inferred reasoning traces, showing that richer internal representations can improve agreement with humans; our approach is complementary, as it operates at the level of confidence ordering and selective guarantees, independent of the internal reasoning mechanism of the judge. 
CheckEval \citep{lee2024checkeval} introduces a checklist-based, decomposed evaluation framework to improve robustness and interpretability, focusing on structured criteria rather than probabilistic guarantees. 
HREF \citep{lyu2024href} leverages human responses to guide evaluation of instruction-following models, particularly relevant to datasets such as AlpacaEval and Chatbot Arena; unlike HREF, we do not rely on direct human-in-the-loop guidance at test time but instead learn a confidence estimator that generalizes beyond calibration data. 
Finally, work on metacognition and confidence divergence between humans and LLMs \citep{atwell2025measuring} provides additional motivation for learning robust confidence estimators. 
Taken together, these approaches are largely complementary: while prior methods focus on richer judge architectures, task decomposition, or human guidance, our contribution centers on statistical generalization of confidence ranking as a principled route to reliable, scalable LLM evaluation with human-agreement guarantees.

\section{Additional Empirical Results}

We provide more empirical results in this section.

\begin{table}[h]
\centering
\caption{AUROC results of Mistral-7B under different values of $\beta$.}
\label{tab:auroc_ablationaaa}
\begin{tabular}{lccccc}
\toprule
\textbf{Dataset} & $\boldsymbol{\beta=0}$ & $\boldsymbol{\beta=10^{-5}}$ & $\boldsymbol{\beta=10^{-4}}$ & $\boldsymbol{\beta=10^{-3}}$ & $\boldsymbol{\beta=10^{-2}}$ \\
\midrule
AlpacaEval    & 0.6201 & 0.6401 & \textbf{0.6672} & 0.6643 & 0.6087 \\
HH-RLHF       & 0.6159 & 0.6604 & \textbf{0.6805} & 0.6783 & 0.6290 \\
Chatbot Arena & 0.7034 & 0.7085 & 0.7127 & \textbf{0.7156} & 0.6961 \\
TL;DR         & 0.6083 & 0.6347 & \textbf{0.6409} & 0.6372 & 0.6004 \\
\bottomrule
\end{tabular}
\end{table}

\begin{table}[h]
\centering
\caption{Representative trajectory of $\gamma$ during training for Mistral-7B.}
\label{tab:gamma_trajectory}
\begin{tabular}{ccc}
\toprule
\textbf{Epoch} & \textbf{$\gamma$ (Chatbot Arena)} & \textbf{$\gamma$ (HH-RLHF)} \\
\midrule
1  & 0.0109 & 0.0102 \\
10 & 0.0420 & 0.0127 \\
20 & 0.0564 & 0.0180 \\
30 & 0.0626 & 0.0253 \\
\bottomrule
\end{tabular}
\end{table}

\end{document}